\documentclass[letterpaper]{article} 
\usepackage{aaai24}  



\usepackage[T1]{fontenc}
\usepackage[utf8]{inputenc}
\usepackage{mathtools}

\usepackage{amssymb,mathrsfs}
\usepackage{amsthm}
\usepackage{bm}
\usepackage{scalerel}
\usepackage{nicefrac}
\usepackage{microtype} 
\usepackage[shortlabels]{enumitem}
\usepackage{graphicx}
\usepackage{epstopdf}
\DeclareGraphicsExtensions{.eps,.png,.jpg,.pdf}

\usepackage{colortbl}
\usepackage{booktabs}
\usepackage{multirow}
\usepackage{colortbl,xcolor}
\usepackage{xparse,xstring}
\usepackage{calc}
\usepackage{etoolbox}



\usepackage{array}
\newcolumntype{L}[1]{>{\raggedright\let\newline\\\arraybackslash\hspace{0pt}}m{#1}}
\newcolumntype{C}[1]{>{\centering\let\newline\\\arraybackslash\hspace{0pt}}m{#1}}
\newcolumntype{R}[1]{>{\raggedleft\let\newline\\\arraybackslash\hspace{0pt}}m{#1}}

\makeatletter
\let\MYcaption\@makecaption
\makeatother
\usepackage[font=footnotesize]{subcaption}
\makeatletter
\let\@makecaption\MYcaption
\makeatother

\usepackage{glossaries}
\makeatletter
\sfcode`\.1006

\let\oldgls\gls
\let\oldglspl\glspl

\newcommand\fussy@ifnextchar[3]{%
	\let\reserved@d=#1%
	\def\reserved@a{#2}%
	\def\reserved@b{#3}%
	\futurelet\@let@token\fussy@ifnch}
\def\fussy@ifnch{%
	\ifx\@let@token\reserved@d
		\let\reserved@c\reserved@a
	\else
		\let\reserved@c\reserved@b
	\fi
	\reserved@c}

\renewcommand{\gls}[1]{%
\oldgls{#1}\fussy@ifnextchar.{\@checkperiod}{\@}}
\renewcommand{\glspl}[1]{%
\oldglspl{#1}\fussy@ifnextchar.{\@checkperiod}{\@}}

\newcommand{\@checkperiod}[1]{%
	\ifnum\sfcode`\.=\spacefactor\else#1\fi
}

\robustify{\gls}
\robustify{\glspl}
\makeatother

\newacronym{wrt}{w.r.t.}{with respect to}
\newacronym{RHS}{R.H.S.}{right-hand side}
\newacronym{LHS}{L.H.S.}{left-hand side}
\newacronym{iid}{i.i.d.}{independent and identically distributed}
\newacronym{SOTA}{SOTA}{state-of-the-art}

\usepackage{float}


\usepackage[capitalize]{cleveref}
\crefname{equation}{}{}
\Crefname{equation}{}{}
\crefname{claim}{claim}{claims}
\crefname{step}{step}{steps}
\crefname{line}{line}{lines}
\crefname{condition}{condition}{conditions}
\crefname{dmath}{}{}
\crefname{dseries}{}{}
\crefname{dgroup}{}{}

\crefname{Problem}{Problem}{Problems}
\crefformat{Problem}{Problem~#2#1#3}
\crefrangeformat{Problem}{Problems~#3#1#4 to~#5#2#6}

\crefname{Theorem}{Theorem}{Theorems}
\crefname{Corollary}{Corollary}{Corollaries}
\crefname{Proposition}{Proposition}{Propositions}
\crefname{Lemma}{Lemma}{Lemmas}
\crefname{Definition}{Definition}{Definitions}
\crefname{Example}{Example}{Examples}
\crefname{Assumption}{Assumption}{Assumptions}
\crefname{Remark}{Remark}{Remarks}
\crefname{Rem}{Remark}{Remarks}
\crefname{remarks}{Remarks}{Remarks}
\crefname{Appendix}{Appendix}{Appendices}
\crefname{Supplement}{Supplement}{Supplements}
\crefname{Exercise}{Exercise}{Exercises}
\crefname{Theorem_A}{Theorem}{Theorems}
\crefname{Corollary_A}{Corollary}{Corollaries}
\crefname{Proposition_A}{Proposition}{Propositions}
\crefname{Lemma_A}{Lemma}{Lemmas}
\crefname{Definition_A}{Definition}{Definitions}

\usepackage{crossreftools}


\ifx\loadbreqn\undefined
	\relax
\else
	\usepackage{breqn}
\fi


\interdisplaylinepenalty=2500   



\makeatletter
\def\cleartheorem#1{%
    \expandafter\let\csname#1\endcsname\relax
    \expandafter\let\csname c@#1\endcsname\relax
}
\def\clearthms#1{ \@for\tname:=#1\do{\cleartheorem\tname} }
\makeatother

\ifx\renewtheorem\undefined
	\ifx\useTheoremCounter\undefined
		\newtheorem{Theorem}{Theorem}
		\newtheorem{Corollary}{Corollary}
		\newtheorem{Proposition}{Proposition}
		
	\else
		\newtheorem{Theorem}{Theorem}

	\fi

	\newtheorem{Definition}{Definition}
	
	\newtheorem{Remark}{Remark}


\fi

\theoremstyle{remark}

\theoremstyle{plain}




\newcommand{\qednew}{\nobreak \ifvmode \relax \else
		\ifdim\lastskip<1.5em \hskip-\lastskip
			\hskip1.5em plus0em minus0.5em \fi \nobreak
		\vrule height0.75em width0.5em depth0.25em\fi}

\makeatletter

\makeatother

\newcommand{\nn}{\nonumber\\ }

\NewDocumentCommand{\movedownsub}{e{^_}}{%
	\IfNoValueTF{#1}{%
		\IfNoValueF{#2}{^{}}
	}{%
		^{#1}
	}%
	\IfNoValueF{#2}{_{#2}}
}

\let\latexchi\chi
\RenewDocumentCommand{\chi}{}{\latexchi\movedownsub}

\newcommand{\Real}{\mathbb{R}}

\newcommand{\Complex}{\mathbb{C}}



\newcommand{\calF}{\mathcal{F}}
\newcommand{\calG}{\mathcal{G}}

\newcommand{\calV}{\mathcal{V}}


\newcommand{\bW}{\mathbf{W}}
\newcommand{\bx}{\mathbf{x}}
\newcommand{\bX}{\mathbf{X}}




\DeclareSymbolFont{bsfletters}{OT1}{cmss}{bx}{n}
\DeclareSymbolFont{ssfletters}{OT1}{cmss}{m}{n}
\DeclareMathSymbol{\bsfGamma}{0}{bsfletters}{'000}
\DeclareMathSymbol{\ssfGamma}{0}{ssfletters}{'000}
\DeclareMathSymbol{\bsfDelta}{0}{bsfletters}{'001}
\DeclareMathSymbol{\ssfDelta}{0}{ssfletters}{'001}
\DeclareMathSymbol{\bsfTheta}{0}{bsfletters}{'002}
\DeclareMathSymbol{\ssfTheta}{0}{ssfletters}{'002}
\DeclareMathSymbol{\bsfLambda}{0}{bsfletters}{'003}
\DeclareMathSymbol{\ssfLambda}{0}{ssfletters}{'003}
\DeclareMathSymbol{\bsfXi}{0}{bsfletters}{'004}
\DeclareMathSymbol{\ssfXi}{0}{ssfletters}{'004}
\DeclareMathSymbol{\bsfPi}{0}{bsfletters}{'005}
\DeclareMathSymbol{\ssfPi}{0}{ssfletters}{'005}
\DeclareMathSymbol{\bsfSigma}{0}{bsfletters}{'006}
\DeclareMathSymbol{\ssfSigma}{0}{ssfletters}{'006}
\DeclareMathSymbol{\bsfUpsilon}{0}{bsfletters}{'007}
\DeclareMathSymbol{\ssfUpsilon}{0}{ssfletters}{'007}
\DeclareMathSymbol{\bsfPhi}{0}{bsfletters}{'010}
\DeclareMathSymbol{\ssfPhi}{0}{ssfletters}{'010}
\DeclareMathSymbol{\bsfPsi}{0}{bsfletters}{'011}
\DeclareMathSymbol{\ssfPsi}{0}{ssfletters}{'011}
\DeclareMathSymbol{\bsfOmega}{0}{bsfletters}{'012}
\DeclareMathSymbol{\ssfOmega}{0}{ssfletters}{'012}


\makeatletter
\newcommand*\rel@kern[1]{\kern#1\dimexpr\macc@kerna}
\newcommand*\widebar[1]{%
  \begingroup
  \def\mathaccent##1##2{%
    \rel@kern{0.8}%
    \overline{\rel@kern{-0.8}\macc@nucleus\rel@kern{0.2}}%
    \rel@kern{-0.2}%
  }%
  \macc@depth\@ne
  \let\math@bgroup\@empty \let\math@egroup\macc@set@skewchar
  \mathsurround\z@ \frozen@everymath{\mathgroup\macc@group\relax}%
  \macc@set@skewchar\relax
  \let\mathaccentV\macc@nested@a
  \macc@nested@a\relax111{#1}%
  \endgroup
}
\makeatother



\DeclareMathOperator{\diag}{diag}

\DeclareMathOperator{\var}{var}

\DeclareMathOperator{\cov}{cov}

\newcommand{\ifbcdot}[1]{\ifblank{#1}{\cdot}{#1}}

\DeclarePairedDelimiterX\abs[1]{\lvert}{\rvert}{\ifbcdot{#1}}
\DeclarePairedDelimiterX\parens[1]{(}{)}{\ifbcdot{#1}}
\DeclarePairedDelimiterX\brk[1]{[}{]}{\ifbcdot{#1}}
\DeclarePairedDelimiterX\braces[1]{\{}{\}}{\ifbcdot{#1}}
\DeclarePairedDelimiterX\angles[1]{\langle}{\rangle}{\ifblank{#1}{\cdot,\cdot}{#1}}
\DeclarePairedDelimiterX\ip[2]{\langle}{\rangle}{\ifbcdot{#1},\ifbcdot{#2}}
\DeclarePairedDelimiterX\norm[1]{\lVert}{\rVert}{\ifbcdot{#1}}
\DeclarePairedDelimiterX\ceil[1]{\lceil}{\rceil}{\ifbcdot{#1}}
\DeclarePairedDelimiterX\floor[1]{\lfloor}{\rfloor}{\ifbcdot{#1}}

\DeclareFontFamily{U}{matha}{\hyphenchar\font45}
\DeclareFontShape{U}{matha}{m}{n}{
      <5> <6> <7> <8> <9> <10> gen * matha
      <10.95> matha10 <12> <14.4> <17.28> <20.74> <24.88> matha12
      }{}
\DeclareSymbolFont{matha}{U}{matha}{m}{n}
\DeclareFontSubstitution{U}{matha}{m}{n}

\DeclareFontFamily{U}{mathx}{\hyphenchar\font45}
\DeclareFontShape{U}{mathx}{m}{n}{
      <5> <6> <7> <8> <9> <10>
      <10.95> <12> <14.4> <17.28> <20.74> <24.88>
      mathx10
      }{}
\DeclareSymbolFont{mathx}{U}{mathx}{m}{n}
\DeclareFontSubstitution{U}{mathx}{m}{n}

\DeclareMathDelimiter{\vvvert}{0}{matha}{"7E}{mathx}{"17}
\DeclarePairedDelimiterX\vertiii[1]{\vvvert}{\vvvert}{\ifbcdot{#1}}

\DeclarePairedDelimiterXPP\trace[1]{\operatorname{Tr}}{(}{)}{}{\ifbcdot{#1}} 
\DeclarePairedDelimiterXPP\col[1]{\operatorname{col}}{\{}{\}}{}{\ifbcdot{#1}} 
\DeclarePairedDelimiterXPP\row[1]{\operatorname{row}}{\{}{\}}{}{\ifbcdot{#1}} 
\DeclarePairedDelimiterXPP\erf[1]{\operatorname{erf}}{(}{)}{}{\ifbcdot{#1}}
\DeclarePairedDelimiterXPP\erfc[1]{\operatorname{erfc}}{(}{)}{}{\ifbcdot{#1}}
\DeclarePairedDelimiterXPP\KLD[2]{D}{(}{)}{}{\ifbcdot{#1}\, \delimsize\|\, \ifbcdot{#2}} 
\DeclarePairedDelimiterXPP\op[2]{\operatorname{#1}}{(}{)}{}{#2} 


\newcommand{\T}{^{\mkern-1.5mu\mathop\intercal}}
\newcommand{\ud}{\,\mathrm{d}} 

\DeclarePairedDelimiterXPP\indicate[1]{{\bf 1}}{\{}{\}}{}{\ifbcdot{#1}}

\NewDocumentCommand\ofrac{s m}{%
	\IfBooleanTF#1%
	{\dfrac{1}{#2}}%
	{\frac{1}{#2}}%
}
\NewDocumentCommand\ddfrac{s m m}{%
	\IfBooleanTF#1%
	{\dfrac{\mathrm{d} {#2}}{\mathrm{d} {#3}}}%
	{\frac{\mathrm{d} {#2}}{\mathrm{d} {#3}}}%
}
\NewDocumentCommand\ppfrac{s m m}{%
	\IfBooleanTF#1%
	{\dfrac{\partial {#2}}{\partial {#3}}}%
	{\frac{\partial {#2}}{\partial {#3}}}%
}

\providecommand\given{}

\DeclarePairedDelimiterX\Set[2]\{\}{%
\renewcommand\given{\SetSymbol[\delimsize]{#1}}
#2
}
\DeclarePairedDelimiterX\Setc[1]\{\}{%
\renewcommand\given{\SetSymbol{:}}
#1
}

\NewDocumentCommand\set{s o m}{%
	\IfBooleanTF#1%
	{\IfValueTF{#2}{\Set*{#2}{#3}}{\Setc*{#3}}}%
	{\IfValueTF{#2}{\Set{#2}{#3}}{\Setc{#3}}}%
}


\NewDocumentCommand{\evalat}{ s O{\big} m e{_^} }{%
\IfBooleanTF{#1}%
{\left. #3 \right|}{#3#2|}%
\IfValueT{#4}{_{#4}}%
\IfValueT{#5}{^{#5}}%
}


\providecommand\given{}
\DeclarePairedDelimiterXPP\cprob[1]{}(){}{
\renewcommand\given{\nonscript\,\delimsize\vert\allowbreak\nonscript\,\mathopen{}}%
#1%
}
\DeclarePairedDelimiterXPP\cexp[1]{}[]{}{
\renewcommand\given{\nonscript\,\delimsize\vert\allowbreak\nonscript\,\mathopen{}}%
#1%
}

\DeclareDocumentCommand \P { s e{_^} d() g } {%
	\mathbb{P}%
	\IfBooleanTF{#1}%
		{
			\IfValueT{#2}{_{#2}}%
			\IfValueT{#3}{^{#3}}%
			\IfValueTF{#5}{\cprob{#4 \given #5}}{\IfValueT{#4}{\cprob{#4}}}%
		}%
		{
			\IfValueT{#2}{_{#2}}%
			\IfValueT{#3}{^{#3}}%
			\IfValueTF{#5}{\cprob*{#4 \given #5}}{\IfValueT{#4}{\cprob*{#4}}}%
		}%
}

\DeclareDocumentCommand \E { s e{_^} o g } {%
	\mathbb{E}%
	\IfBooleanTF{#1}%
		{
			\IfValueT{#2}{_{#2}}%
			\IfValueT{#3}{^{#3}}%
			\IfValueTF{#5}{\cexp{#4 \given #5}}{\IfValueT{#4}{\cexp{#4}}}%
		}%
		{
			\IfValueT{#2}{_{#2}}%
			\IfValueT{#3}{^{#3}}%
			\IfValueTF{#5}{\cexp*{#4 \given #5}}{\IfValueT{#4}{\cexp*{#4}}}%
		}%
}

\DeclareDocumentCommand \Var { s e{_^} d() g } {%
	\var%
	\IfBooleanTF{#1}%
		{
			\IfValueT{#2}{_{#2}}%
			\IfValueT{#3}{^{#3}}%
			\IfValueTF{#5}{\cprob{#4 \given #5}}{\IfValueT{#4}{\cprob{#4}}}%
		}%
		{
			\IfValueT{#2}{_{#2}}%
			\IfValueT{#3}{^{#3}}%
			\IfValueTF{#5}{\cprob*{#4 \given #5}}{\IfValueT{#4}{\cprob*{#4}}}%
		}%
}

\DeclareDocumentCommand \Cov { s e{_^} d() g } {%
	\cov%
	\IfBooleanTF{#1}%
		{
			\IfValueT{#2}{_{#2}}%
			\IfValueT{#3}{^{#3}}%
			\IfValueTF{#5}{\cprob{#4 \given #5}}{\IfValueT{#4}{\cprob{#4}}}%
		}%
		{
			\IfValueT{#2}{_{#2}}%
			\IfValueT{#3}{^{#3}}%
			\IfValueTF{#5}{\cprob*{#4 \given #5}}{\IfValueT{#4}{\cprob*{#4}}}%
		}%
}

\ExplSyntaxOn
\NewDocumentCommand \dist {m o o} {%
\mathrm{#1}\left(%
	\IfValueT{#3}{%
		\tl_if_blank:nTF{ #3 }{\cdot\, \middle|\, }{#3\, \middle|\, }%
	}
	\IfValueT{#2}{#2}%
\right)%
}
\ExplSyntaxOff


\NewDocumentCommand {\cbrace} {t+ D[]{black} D(){\widthof{#5}} m m } {%
	\begingroup%
		\color{#2}
		\IfBooleanTF{#1}{%
			\overbrace{#4}^%
		}{
			\underbrace{#4}_%
		}%
		{\parbox[c]{#3}{\centering\footnotesize{#5}}}%
	\endgroup%
}

\let\oldforall\forall
\renewcommand{\forall}{\oldforall \, }

\let\oldexist\exists
\renewcommand{\exists}{\oldexist \, }

\makeatletter

\newcommand{\rankcolor}[2]{%
	\expandafter\renewcommand\csname #1\endcsname[1]{%
		\ifblank{##1}{%
			{\color{#2} \textbf{#2}}%
		}{%
			\ifmmode
				\textcolor{#2}{\bm{##1}}%
			\else%
				{\color{#2} \textbf{##1}}%
			\fi	
		}%
	}
}




\graphicspath{{./Figures/}{./figures/}}
\pdfsuppresswarningpagegroup=1

\DeclareDocumentCommand{\includeCroppedPdf}{ o O{./Figures/} m }{
	\IfFileExists{#2#3-crop.pdf}{}{%
		\immediate\write18{pdfcrop #2#3.pdf #2#3-crop.pdf}}%
	\includegraphics[#1]{#2#3-crop.pdf}
}



\makeatletter
\newcommand*{\addFileDependency}[1]{
  \typeout{(#1)}
  \@addtofilelist{#1}
  \IfFileExists{#1}{}{\typeout{No file #1.}}
}
\makeatother

\definecolor{gray90}{gray}{0.9}
\def\colorlist{red,blue,brown,cyan,darkgray,gray,lightgray,green,lime,magenta,olive,orange,pink,purple,teal,violet,white,yellow}

\makeatletter
\def\startcomment{[}
\ifx\nohighlights\undefined
	\newcommand{\createcolor}[1]{%
			\expandafter\newcommand\csname #1\endcsname[1]{{\color{#1} ##1}}%
	}
	\newcommand{\msout}[1]{\text{\color{green} \sout{\ensuremath{#1}}}}
	\newcommand{\del}[1]{{\color{green}\ifmmode \msout{#1}\else\sout{#1}\fi}}
\else
	\newcommand{\createcolor}[1]{%
			\expandafter\newcommand\csname #1\endcsname[1]{%
				\noexpandarg%
				\StrChar{##1}{1}[\firstletter]%
				\if\firstletter\startcomment%
					\relax
				\else%
					##1
				\fi
			}%
	}
	\newcommand{\msout}[1]{}
	\newcommand{\del}[1]{}
\fi

\def\@tempa#1,{%
    \ifx\relax#1\relax\else
        \createcolor{#1}%
        \expandafter\@tempa
    \fi
}
\expandafter\@tempa\colorlist,\relax,
\makeatother

\newcommand{\hhide}[1]{}


\ifx\diagnoselabel\undefined
	\relax
\else
	\makeatletter
	\def\@testdef #1#2#3{%
		\def\reserved@a{#3}\expandafter \ifx \csname #1@#2\endcsname
			\reserved@a  \else
			\typeout{^^Jlabel #2 changed:^^J%
				\meaning\reserved@a^^J%
				\expandafter\meaning\csname #1@#2\endcsname^^J}%
			\@tempswatrue \fi}
	\makeatother
\fi


\newcommand{\tb}[1]{\textbf{#1}}

\usepackage{times}  
\usepackage{helvet}  
\usepackage{courier}  
\usepackage[hyphens]{url}  
\urlstyle{rm} 
\usepackage{natbib}  
\usepackage{caption} 
\frenchspacing  
\setlength{\pdfpagewidth}{8.5in} 
\setlength{\pdfpageheight}{11in} 
%
\usepackage{algorithm}
\usepackage{algorithmic}

\usepackage{tikz}
\usepackage{adjustbox}
\usepackage{newfloat}
\usepackage{listings}
\DeclareCaptionStyle{ruled}{labelfont=normalfont,labelsep=colon,strut=off} 
\lstset{%
	basicstyle={\footnotesize\ttfamily},
	numbers=left,numberstyle=\footnotesize,xleftmargin=2em,
	aboveskip=0pt,belowskip=0pt,%
	showstringspaces=false,tabsize=2,breaklines=true}
\floatstyle{ruled}
\newfloat{listing}{tb}{lst}{}
\floatname{listing}{Listing}

\newcommand{\first}{\textbf}
\newcommand{\second}{\underline}
%
\pdfinfo{
/TemplateVersion (2024.1)
}

\setcounter{secnumdepth}{2} 

%


\title{Coupling Graph Neural Networks with Fractional Order\\ 
Continuous Dynamics: A Robustness Study}
\author{
    Qiyu~Kang\textsuperscript{\rm 1}\equalcontrib, Kai~Zhao\textsuperscript{\rm 1}\equalcontrib\thanks{Correspondence to: Kai Zhao <kai.zhao@ntu.edu.sg>}, 
    Yang~Song\textsuperscript{\rm 2}, Yihang Xie\textsuperscript{\rm 1},\\ Yanan Zhao\textsuperscript{\rm 1}, Sijie Wang\textsuperscript{\rm 1}, Rui She\textsuperscript{\rm 1}, Wee~Peng~Tay\textsuperscript{\rm 1}
}
\affiliations{
\textsuperscript{\rm 1}Nanyang Technological University\\
\textsuperscript{\rm 2}C3 AI, Singapore\\
}

\begin{document}

\maketitle

\begin{abstract}
In this work, we rigorously investigate the robustness of graph neural fractional-order differential equation (FDE) models. This framework extends beyond traditional graph neural (integer-order) ordinary differential equation (ODE) models by implementing the time-fractional Caputo derivative. Utilizing fractional calculus allows our model to consider long-term memory during the feature updating process, diverging from the memoryless Markovian updates seen in traditional graph neural ODE models. 
The superiority of graph neural FDE models over graph neural ODE models has been established in environments free from attacks or perturbations.
While traditional graph neural ODE models have been verified to possess a degree of stability and resilience in the presence of adversarial attacks in existing literature, the robustness of graph neural FDE models, especially under adversarial conditions, remains largely unexplored. This paper undertakes a detailed assessment of the robustness of graph neural FDE models. We establish a theoretical foundation outlining the robustness characteristics of graph neural FDE models, highlighting that they maintain more stringent output perturbation bounds in the face of input and graph topology disturbances, compared to their integer-order counterparts. Our empirical evaluations further confirm the enhanced robustness of graph neural FDE models, highlighting their potential in adversarially robust applications.

\end{abstract}
\section{Introduction}
Graph Neural Networks (GNNs) \cite{ kipf2017semi, velickovic2017graph,JiLeeMen:C23,LeeJiTay:C22,SheKanWan:J23} have emerged as an influential tool capable of extracting meaningful representations from intricate datasets, such as social networks \cite{huang2021social} and molecular structures \cite{guo2022molecular}. 
Despite their impressive capability, GNNs have been found susceptible to adversarial attacks \cite{dai2018advgnn,ma2020pracadvgnn,zugnernettack}, with modifications or injections into the graph often causing significant degradation in performance.
In real-world scenarios, it is common for data to be perturbed during the training or testing phases \cite{dai2022comprehensive,wang2019fraud}, highlighting the importance of studying the robustness of GNNs.  For instance, in financial systems, fraudulent activities may introduce slight perturbations into transactional data, making it paramount for the underlying models to remain robust against these adversarial changes. Similarly, in social networks, misinformation or the presence of bots can skew the data, which can subsequently impact the insights drawn from it. Therefore, the robustness of GNNs is not just a theoretical concern but a practical necessity. 
Several defensive strategies have been established to counteract the damaging implications of adversarial attacks on graph data. Approaches such as GARNET \cite{deng2022garnet}, GNN-Guard \cite{zhang2020gnnguard}, RGCN \cite{zhu2019robustgcn}, and Pro-GNN \cite{jin2020prognn} are grounded in preprocessing techniques that aim to remove adversarial alterations to the structure before GNN training commences. Nonetheless, these methods often necessitate the exploration of graph structure properties, leading to higher computational costs. Furthermore, these strategies are more suitably tailored to combat poisoning attacks.

Recent advances have witnessed a growing use of dynamical system theory in designing and understanding GNNs. Models like CGNN \cite{xhonneux2020continuous}, GRAND \cite{chamberlain2021grand}, GRAND++ \cite{thorpe2021grand++}, GraphCON \cite{rusch2022icml}, HANG \cite{ZhaKanSon:C23b} and CDE \cite{zhao2023cde} employ ordinary differential equations (ODEs) to offer a dynamical system perspective on graph node feature evolution. Typically, these dynamics can be described by:
\begin{align}
\frac{\ud \bX(t)}{\ud t} = \calF(\bW,\bX(t)). \label{eq.graphode}
\end{align}
In this formulation, $\bX(t)$ represents the evolving node features with $\bX(0)$ as the initial input node features, while $\bW$ is the graph's adjacency matrix. 
The function, $\calF$, is specifically tailored for graph dynamics. As a case in point, GRAND \cite{chamberlain2021grand} deploys an attention-based aggregation mechanism akin to heat diffusion on the graph. 
Motivated by the Beltrami diffusion equation \cite{sochenTIP1998beltrami}, the paper \cite{SonKanWan:C22}  introduces a model based on the Beltrami flow (abbreviated as GraphBel) and designed for enhanced robustness, particularly in the face of topological perturbations.
In the study \cite{ZhaKanSon:C23b}, graph feature updates are conceptualized as a Hamiltonian flow, endowed with Lyapunov stability, to effectively counter adversarial perturbations. 
GraphCON \cite{rusch2022icml} presents a approach by introducing a second-order graph coupled oscillator for modeling feature updates.
This model can be decomposed into two first-order equations, aligning with the principle that higher integer-order ODEs can be expressed as a system of first-order ODEs through auxiliary variables, effectively encapsulated in \cref{eq.graphode}.

Recent studies have ventured into the intersection of GNNs and fractional calculus \cite{diethelm2010frationalde,kang2023advancing}. One prominent example is the FRactional-Order graph Neural Dynamical network (FROND) framework \cite{FROND2023}. Distinct from conventional graph neural ODE models, FROND leverages fractional-order differential equations (FDEs), with dynamics represented as:
\begin{align}
{D}_t^\beta \bX(t) = \calF(\bW,\bX(t)),\ \beta>0. \label{eq.frac_gra_difaaintro}
\end{align}
The function $\calF(\bW,\bX(t))$ maintains its form as in \cref{eq.graphode}. Typically, we set $\beta\in (0,1]$. The Caputo fractional derivative, denoted by ${D}_t^\beta$, infuses memory into the temporal dynamics (see \cref{ssec.fde} for more details). For $\beta = 1$, the equation reverts to the familiar first-order dynamics as in \cref{eq.graphode}. The distinction lies in the fact that the conventional integer-order derivative measures the function’s \emph{instantaneous change rate}, concentrating on the proximate vicinity of the point. \emph{In contrast, the fractional-order derivative is influenced by the entire historical trajectory of the function,} which substantially diverges from the localized impact found in integer-order derivatives. 

Incorporating a fractional derivative provides GNNs an avenue to mitigate the prevalent oversmoothing problems by enabling slow algebraic convergence \cite{FROND2023}, different from the standard fast exponential convergence. 
Further, with the integration of fractional dynamics, FROND can effortlessly merge with existing graph neural ODE frameworks, potentially increasing their effectiveness, especially with diverse $\beta$ values, without incorporating any additional training parameters to the underlying graph neural ODE models.
Critically, $\beta$ acts as a proxy for the extent of memory in the feature dynamics: a value of $\beta=1$ corresponds to memoryless Markovian dynamics, while $\beta<1$ denotes non-Markovian dynamics with memory. This nuance is further visualized in \cref{fig.block}, where a $\beta<1$ signifies nontrivial dense connections across model discretization timestamps.

Though FROND showcases proficiency in decoding complex graph data patterns, its robustness against adversarial perturbations remains an area of exploration.
By broadening the order of time derivatives from integers to real numbers, fractional calculus can encapsulate more intricate dynamics and data relationships, such as long-range memory effects, where the system's current state is influenced by its comprehensive history, not merely its recent states. 
This capability augments a GNN's ability to more accurately represent the node features across layers, rendering them less susceptible to noise and perturbations. 
In this work, we delve deeply into the ramifications of the fractional order parameter $\beta$ on the robustness attributes of FROND. Our analysis suggests a monotonic relationship between the model's perturbation bounds and the parameter $\beta$, with smaller $\beta$ values indicating augmented robustness.

Our contributions are summarized as follows:
\begin{itemize}
\item We rigorously investigate the robustness characteristics of graph neural FDE models, i.e., FROND models. We show that FROND models exhibit tighter output perturbation bounds compared to their integer-order counterparts in the presence of input and topology perturbations.
\item Through extensive experimental evaluations, including graph modifications and injection attacks,  we empirically demonstrate the superior robustness of FROND models in contrast to conventional graph neural ODE models.
\end{itemize}

\begin{figure}[t]
    \centering
    \adjustbox{scale=0.8,center}{

\tikzset{every picture/.style={line width=0.75pt}} 

\begin{tikzpicture}[x=0.75pt,y=0.75pt,yscale=-1,xscale=1]

\draw  [color={rgb, 255:red, 155; green, 155; blue, 155 }  ,draw opacity=1 ][fill={rgb, 255:red, 255; green, 255; blue, 255 }  ,fill opacity=1 ] (190.39,67.6) -- (316.6,45.3) -- (315.25,98.07) -- (189.03,120.37) -- cycle ;
\draw  [fill={rgb, 255:red, 74; green, 144; blue, 226 }  ,fill opacity=1 ] (287.1,76.9) .. controls (286.37,75.4) and (287.66,73.85) .. (289.98,73.44) .. controls (292.31,73.02) and (294.79,73.89) .. (295.52,75.39) .. controls (296.25,76.88) and (294.96,78.43) .. (292.64,78.85) .. controls (290.32,79.27) and (287.84,78.39) .. (287.1,76.9) -- cycle ;
\draw   (239.42,85.71) .. controls (238.69,84.22) and (239.97,82.66) .. (242.3,82.25) .. controls (244.62,81.83) and (247.1,82.7) .. (247.84,84.2) .. controls (248.57,85.69) and (247.28,87.24) .. (244.96,87.66) .. controls (242.63,88.08) and (240.15,87.2) .. (239.42,85.71) -- cycle ;
\draw    (240.13,86.9) -- (219.11,100.76) ;
\draw    (210.62,81.14) -- (238.8,83.77) ;
\draw [color={rgb, 255:red, 2; green, 122; blue, 250 }  ,draw opacity=0.4 ]   (249.8,83.83) -- (262.59,81.47) -- (264.6,81.09) -- (287.23,76.91) ;
\draw [shift={(247.84,84.2)}, rotate = 349.51] [fill={rgb, 255:red, 2; green, 122; blue, 250 }  ,fill opacity=0.4 ][line width=0.08]  [draw opacity=0] (7.2,-1.8) -- (0,0) -- (7.2,1.8) -- cycle    ;
\draw   (202.21,82.65) .. controls (201.47,81.16) and (202.76,79.61) .. (205.09,79.19) .. controls (207.41,78.77) and (209.89,79.64) .. (210.62,81.14) .. controls (211.36,82.63) and (210.07,84.18) .. (207.74,84.6) .. controls (205.42,85.02) and (202.94,84.15) .. (202.21,82.65) -- cycle ;
\draw   (211.79,102.98) .. controls (211.05,101.48) and (212.34,99.93) .. (214.67,99.51) .. controls (216.99,99.1) and (219.47,99.97) .. (220.2,101.46) .. controls (220.94,102.96) and (219.65,104.51) .. (217.32,104.93) .. controls (215,105.34) and (212.52,104.47) .. (211.79,102.98) -- cycle ;
\draw [color={rgb, 255:red, 0; green, 0; blue, 0 }  ,draw opacity=1 ]   (192,149.5) -- (349.34,254.39) ;
\draw [shift={(351,255.5)}, rotate = 213.69] [color={rgb, 255:red, 0; green, 0; blue, 0 }  ,draw opacity=1 ][line width=0.75]    (10.93,-3.29) .. controls (6.95,-1.4) and (3.31,-0.3) .. (0,0) .. controls (3.31,0.3) and (6.95,1.4) .. (10.93,3.29)   ;
\draw  [color={rgb, 255:red, 155; green, 155; blue, 155 }  ,draw opacity=1 ][fill={rgb, 255:red, 255; green, 255; blue, 255 }  ,fill opacity=1 ] (233.99,97.3) -- (360.2,75) -- (358.85,127.77) -- (232.63,150.07) -- cycle ;
\draw  [fill={rgb, 255:red, 74; green, 144; blue, 226 }  ,fill opacity=1 ] (330.7,106.1) .. controls (329.97,104.6) and (331.26,103.05) .. (333.58,102.64) .. controls (335.91,102.22) and (338.39,103.09) .. (339.12,104.59) .. controls (339.85,106.08) and (338.56,107.63) .. (336.24,108.05) .. controls (333.92,108.47) and (331.44,107.59) .. (330.7,106.1) -- cycle ;
\draw   (283.02,114.91) .. controls (282.29,113.42) and (283.57,111.86) .. (285.9,111.45) .. controls (288.22,111.03) and (290.7,111.9) .. (291.44,113.4) .. controls (292.17,114.89) and (290.88,116.44) .. (288.56,116.86) .. controls (286.23,117.28) and (283.75,116.4) .. (283.02,114.91) -- cycle ;
\draw    (283.73,116.1) -- (262.71,129.96) ;
\draw    (254.22,110.34) -- (282.4,112.97) ;
\draw [color={rgb, 255:red, 2; green, 122; blue, 250 }  ,draw opacity=0.4 ]   (293.4,113.03) -- (306.19,110.67) -- (308.2,110.29) -- (330.83,106.11) ;
\draw [shift={(291.44,113.4)}, rotate = 349.51] [fill={rgb, 255:red, 2; green, 122; blue, 250 }  ,fill opacity=0.4 ][line width=0.08]  [draw opacity=0] (7.2,-1.8) -- (0,0) -- (7.2,1.8) -- cycle    ;
\draw   (245.81,111.85) .. controls (245.07,110.36) and (246.36,108.81) .. (248.69,108.39) .. controls (251.01,107.97) and (253.49,108.84) .. (254.22,110.34) .. controls (254.96,111.83) and (253.67,113.38) .. (251.34,113.8) .. controls (249.02,114.22) and (246.54,113.35) .. (245.81,111.85) -- cycle ;
\draw   (255.39,132.18) .. controls (254.65,130.68) and (255.94,129.13) .. (258.27,128.71) .. controls (260.59,128.3) and (263.07,129.17) .. (263.8,130.66) .. controls (264.54,132.16) and (263.25,133.71) .. (260.92,134.13) .. controls (258.6,134.54) and (256.12,133.67) .. (255.39,132.18) -- cycle ;
\draw  [color={rgb, 255:red, 155; green, 155; blue, 155 }  ,draw opacity=1 ][fill={rgb, 255:red, 255; green, 255; blue, 255 }  ,fill opacity=1 ] (279.59,126.1) -- (405.8,103.8) -- (404.45,156.57) -- (278.23,178.87) -- cycle ;
\draw  [fill={rgb, 255:red, 74; green, 144; blue, 226 }  ,fill opacity=1 ] (376.7,134.5) .. controls (375.97,133) and (377.26,131.45) .. (379.58,131.04) .. controls (381.91,130.62) and (384.39,131.49) .. (385.12,132.99) .. controls (385.85,134.48) and (384.56,136.03) .. (382.24,136.45) .. controls (379.92,136.87) and (377.44,135.99) .. (376.7,134.5) -- cycle ;
\draw   (329.02,143.31) .. controls (328.29,141.82) and (329.57,140.26) .. (331.9,139.85) .. controls (334.22,139.43) and (336.7,140.3) .. (337.44,141.8) .. controls (338.17,143.29) and (336.88,144.84) .. (334.56,145.26) .. controls (332.23,145.68) and (329.75,144.8) .. (329.02,143.31) -- cycle ;
\draw    (329.73,144.5) -- (308.71,158.36) ;
\draw    (300.22,138.74) -- (328.4,141.37) ;
\draw [color={rgb, 255:red, 2; green, 122; blue, 250 }  ,draw opacity=0.4 ]   (339.4,141.43) -- (352.19,139.07) -- (354.2,138.69) -- (376.83,134.51) ;
\draw [shift={(337.44,141.8)}, rotate = 349.51] [fill={rgb, 255:red, 2; green, 122; blue, 250 }  ,fill opacity=0.4 ][line width=0.08]  [draw opacity=0] (7.2,-1.8) -- (0,0) -- (7.2,1.8) -- cycle    ;
\draw   (291.81,140.25) .. controls (291.07,138.76) and (292.36,137.21) .. (294.69,136.79) .. controls (297.01,136.37) and (299.49,137.24) .. (300.22,138.74) .. controls (300.96,140.23) and (299.67,141.78) .. (297.34,142.2) .. controls (295.02,142.62) and (292.54,141.75) .. (291.81,140.25) -- cycle ;
\draw   (301.39,160.58) .. controls (300.65,159.08) and (301.94,157.53) .. (304.27,157.11) .. controls (306.59,156.7) and (309.07,157.57) .. (309.8,159.06) .. controls (310.54,160.56) and (309.25,162.11) .. (306.92,162.53) .. controls (304.6,162.94) and (302.12,162.07) .. (301.39,160.58) -- cycle ;
\draw [color={rgb, 255:red, 4; green, 116; blue, 248 }  ,draw opacity=0.4 ]   (339.12,104.59) .. controls (378.42,112.07) and (376.4,117.04) .. (382.32,132.12) ;
\draw [shift={(383,133.8)}, rotate = 247.48] [fill={rgb, 255:red, 4; green, 116; blue, 248 }  ,fill opacity=0.4 ][line width=0.08]  [draw opacity=0] (7.2,-1.8) -- (0,0) -- (7.2,1.8) -- cycle    ;
\draw [color={rgb, 255:red, 4; green, 116; blue, 248 }  ,draw opacity=0.4 ]   (295.52,75.39) .. controls (332.84,84.19) and (330.87,88.96) .. (336.19,101.22) ;
\draw [shift={(337,103)}, rotate = 244.8] [fill={rgb, 255:red, 4; green, 116; blue, 248 }  ,fill opacity=0.4 ][line width=0.08]  [draw opacity=0] (7.2,-1.8) -- (0,0) -- (7.2,1.8) -- cycle    ;
\draw  [color={rgb, 255:red, 155; green, 155; blue, 155 }  ,draw opacity=1 ][fill={rgb, 255:red, 255; green, 255; blue, 255 }  ,fill opacity=1 ] (324.39,154.6) -- (450.6,132.3) -- (449.25,185.07) -- (323.03,207.37) -- cycle ;
\draw  [color={rgb, 255:red, 155; green, 155; blue, 155 }  ,draw opacity=1 ][fill={rgb, 255:red, 255; green, 255; blue, 255 }  ,fill opacity=1 ] (328.39,157.6) -- (454.6,135.3) -- (453.25,188.07) -- (327.03,210.37) -- cycle ;
\draw  [color={rgb, 255:red, 155; green, 155; blue, 155 }  ,draw opacity=1 ][fill={rgb, 255:red, 255; green, 255; blue, 255 }  ,fill opacity=1 ] (333.39,160.1) -- (459.6,137.8) -- (458.25,190.57) -- (332.03,212.87) -- cycle ;
\draw  [color={rgb, 255:red, 155; green, 155; blue, 155 }  ,draw opacity=1 ][fill={rgb, 255:red, 255; green, 255; blue, 255 }  ,fill opacity=1 ] (337.77,163.45) -- (463.99,141.15) -- (462.63,193.91) -- (336.42,216.21) -- cycle ;
\draw  [color={rgb, 255:red, 155; green, 155; blue, 155 }  ,draw opacity=1 ][fill={rgb, 255:red, 255; green, 255; blue, 255 }  ,fill opacity=1 ] (342.27,165.95) -- (468.49,143.65) -- (467.13,196.41) -- (340.92,218.71) -- cycle ;
\draw  [color={rgb, 255:red, 155; green, 155; blue, 155 }  ,draw opacity=1 ][fill={rgb, 255:red, 255; green, 255; blue, 255 }  ,fill opacity=1 ] (347.27,169.45) -- (473.49,147.15) -- (472.13,199.91) -- (345.92,222.21) -- cycle ;
\draw  [color={rgb, 255:red, 155; green, 155; blue, 155 }  ,draw opacity=1 ][fill={rgb, 255:red, 255; green, 255; blue, 255 }  ,fill opacity=1 ] (353.29,175.06) -- (479.5,152.76) -- (478.15,205.53) -- (351.93,227.83) -- cycle ;
\draw  [fill={rgb, 255:red, 74; green, 144; blue, 226 }  ,fill opacity=1 ] (450,183.86) .. controls (449.27,182.36) and (450.56,180.81) .. (452.88,180.4) .. controls (455.21,179.98) and (457.69,180.85) .. (458.42,182.35) .. controls (459.15,183.84) and (457.86,185.39) .. (455.54,185.81) .. controls (453.22,186.23) and (450.74,185.35) .. (450,183.86) -- cycle ;
\draw   (402.32,192.67) .. controls (401.59,191.17) and (402.87,189.62) .. (405.2,189.21) .. controls (407.52,188.79) and (410,189.66) .. (410.74,191.16) .. controls (411.47,192.65) and (410.18,194.2) .. (407.86,194.62) .. controls (405.53,195.04) and (403.05,194.16) .. (402.32,192.67) -- cycle ;
\draw    (403.03,193.86) -- (382.01,207.72) ;
\draw    (373.52,188.1) -- (401.7,190.73) ;
\draw [color={rgb, 255:red, 2; green, 122; blue, 250 }  ,draw opacity=0.4 ]   (412.7,190.79) -- (425.49,188.43) -- (427.5,188.05) -- (450.13,183.86) ;
\draw [shift={(410.74,191.16)}, rotate = 349.51] [fill={rgb, 255:red, 2; green, 122; blue, 250 }  ,fill opacity=0.4 ][line width=0.08]  [draw opacity=0] (7.2,-1.8) -- (0,0) -- (7.2,1.8) -- cycle    ;
\draw   (365.11,189.61) .. controls (364.37,188.12) and (365.66,186.57) .. (367.99,186.15) .. controls (370.31,185.73) and (372.79,186.6) .. (373.52,188.1) .. controls (374.26,189.59) and (372.97,191.14) .. (370.64,191.56) .. controls (368.32,191.98) and (365.84,191.11) .. (365.11,189.61) -- cycle ;
\draw   (374.69,209.94) .. controls (373.95,208.44) and (375.24,206.89) .. (377.57,206.47) .. controls (379.89,206.06) and (382.37,206.93) .. (383.1,208.42) .. controls (383.84,209.92) and (382.55,211.47) .. (380.22,211.89) .. controls (377.9,212.3) and (375.42,211.43) .. (374.69,209.94) -- cycle ;
\draw  [color={rgb, 255:red, 155; green, 155; blue, 155 }  ,draw opacity=1 ][fill={rgb, 255:red, 255; green, 255; blue, 255 }  ,fill opacity=1 ] (398.29,205.1) -- (524.5,182.8) -- (523.15,235.57) -- (396.93,257.87) -- cycle ;
\draw  [fill={rgb, 255:red, 74; green, 144; blue, 226 }  ,fill opacity=1 ] (495,213.9) .. controls (494.27,212.4) and (495.56,210.85) .. (497.88,210.44) .. controls (500.21,210.02) and (502.69,210.89) .. (503.42,212.39) .. controls (504.15,213.88) and (502.86,215.43) .. (500.54,215.85) .. controls (498.22,216.27) and (495.74,215.39) .. (495,213.9) -- cycle ;
\draw   (447.32,222.71) .. controls (446.59,221.22) and (447.87,219.66) .. (450.2,219.25) .. controls (452.52,218.83) and (455,219.7) .. (455.74,221.2) .. controls (456.47,222.69) and (455.18,224.24) .. (452.86,224.66) .. controls (450.53,225.08) and (448.05,224.2) .. (447.32,222.71) -- cycle ;
\draw    (448.03,223.9) -- (427.01,237.76) ;
\draw    (418.52,218.14) -- (446.7,220.77) ;
\draw [color={rgb, 255:red, 2; green, 122; blue, 250 }  ,draw opacity=0.4 ]   (457.7,220.83) -- (470.49,218.47) -- (472.5,218.09) -- (495.13,213.91) ;
\draw [shift={(455.74,221.2)}, rotate = 349.51] [fill={rgb, 255:red, 2; green, 122; blue, 250 }  ,fill opacity=0.4 ][line width=0.08]  [draw opacity=0] (7.2,-1.8) -- (0,0) -- (7.2,1.8) -- cycle    ;
\draw   (410.11,219.65) .. controls (409.37,218.16) and (410.66,216.61) .. (412.99,216.19) .. controls (415.31,215.77) and (417.79,216.64) .. (418.52,218.14) .. controls (419.26,219.63) and (417.97,221.18) .. (415.64,221.6) .. controls (413.32,222.02) and (410.84,221.15) .. (410.11,219.65) -- cycle ;
\draw   (419.69,239.98) .. controls (418.95,238.48) and (420.24,236.93) .. (422.57,236.51) .. controls (424.89,236.1) and (427.37,236.97) .. (428.1,238.46) .. controls (428.84,239.96) and (427.55,241.51) .. (425.22,241.93) .. controls (422.9,242.34) and (420.42,241.47) .. (419.69,239.98) -- cycle ;
\draw [color={rgb, 255:red, 4; green, 116; blue, 248 }  ,draw opacity=1 ]   (293.02,73.39) .. controls (497.47,7.08) and (513.84,135.95) .. (501.69,209.64) ;
\draw [shift={(501.5,210.75)}, rotate = 279.65] [fill={rgb, 255:red, 4; green, 116; blue, 248 }  ,fill opacity=1 ][line width=0.08]  [draw opacity=0] (7.2,-1.8) -- (0,0) -- (7.2,1.8) -- cycle    ;
\draw [color={rgb, 255:red, 4; green, 116; blue, 248 }  ,draw opacity=0.4 ]   (458.42,182.35) .. controls (494.12,191.28) and (495.16,197.09) .. (497.5,208.57) ;
\draw [shift={(497.88,210.44)}, rotate = 257.98] [fill={rgb, 255:red, 4; green, 116; blue, 248 }  ,fill opacity=0.4 ][line width=0.08]  [draw opacity=0] (7.2,-1.8) -- (0,0) -- (7.2,1.8) -- cycle    ;
\draw [color={rgb, 255:red, 4; green, 116; blue, 248 }  ,draw opacity=1 ]   (334.58,102.64) .. controls (474.62,75.43) and (504.7,118.32) .. (499.96,209.06) ;
\draw [shift={(499.88,210.44)}, rotate = 273.19] [fill={rgb, 255:red, 4; green, 116; blue, 248 }  ,fill opacity=1 ][line width=0.08]  [draw opacity=0] (7.2,-1.8) -- (0,0) -- (7.2,1.8) -- cycle    ;
\draw [color={rgb, 255:red, 4; green, 116; blue, 248 }  ,draw opacity=1 ]   (383.58,130.64) .. controls (492.9,114.91) and (495.95,152.49) .. (497.83,208.73) ;
\draw [shift={(497.88,210.44)}, rotate = 268.11] [fill={rgb, 255:red, 4; green, 116; blue, 248 }  ,fill opacity=1 ][line width=0.08]  [draw opacity=0] (7.2,-1.8) -- (0,0) -- (7.2,1.8) -- cycle    ;
\draw [color={rgb, 255:red, 4; green, 116; blue, 248 }  ,draw opacity=1 ]   (290.73,75.78) .. controls (349.79,58.6) and (386.51,101.48) .. (383.25,131.95) ;
\draw [shift={(383,133.8)}, rotate = 279.29] [fill={rgb, 255:red, 4; green, 116; blue, 248 }  ,fill opacity=1 ][line width=0.08]  [draw opacity=0] (7.2,-1.8) -- (0,0) -- (7.2,1.8) -- cycle    ;

\draw (184,220) node [anchor=north west][inner sep=0.75pt]  [color={rgb, 255:red, 0; green, 0; blue, 0 }  ,opacity=1 ] [align=left] {time discretization};
\draw (190.77,103.94) node [anchor=north west][inner sep=0.75pt]  [font=\tiny]  {$\bX(0)$};
\draw (234.37,134.14) node [anchor=north west][inner sep=0.75pt]  [font=\tiny]  {$\bX(t_1)$};
\draw (280.37,161.94) node [anchor=north west][inner sep=0.75pt]  [font=\tiny]  {$\bX(t_2)$};
\draw (353.67,211.9) node [anchor=north west][inner sep=0.75pt]  [font=\tiny]  {$\bX(t_{n-1})$};
\draw (398.67,242.94) node [anchor=north west][inner sep=0.75pt]  [font=\tiny]  {$\bX(t_{n})$};

\end{tikzpicture}

}
    \caption{ Model discretization in FROND. Unlike the Euler discretization in graph neural ODE models, FROND incorporates connections to historical times, introducing memory effects. Specifically, the dark blue connections observed in FROND at $\beta<1$ are absent in ODEs (corresponding to $\beta=1$). The weight of these skip connections correlates with $b_{j,k+1}(\beta)$ as detailed in \cref{eq.bjk}.}
    \label{fig.block}
\end{figure}
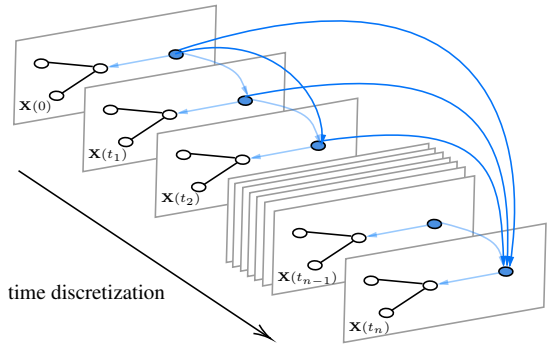
\section{Related Work}

\subsection{Graph Neural ODE Models}
The interplay between ODEs and neural networks has recently shed light on the potential of continuous dynamical systems in deep learning frameworks.
This concept is initially explored in the work of \cite{weinan2017proposal}. A landmark study by \cite{chen2018neural} further cements this notion, presenting neural ODEs equipped with open-source solvers. This methodology allows for a more precise alignment of the inputs and outputs of neural networks with established physical principles, thereby increasing the networks' interpretability. This field continues to evolve, with notable advancements in enhancing neural network efficiency \cite{dupont2019augmented}, bolstering robustness \cite{yan2019robustness,kang2021Neurips}, and stabilizing gradient functions \cite{haber2017stable}. In parallel, \cite{avelar2019discrete,poli2019graph} demonstrate the utilization of continuous residual GNN layers, leveraging neural ODE solvers to optimize output. Recently, GraphCON \cite{rusch2022icml} has implemented the coupled oscillator model, which effectively maintains the Dirichlet energy of graphs over time, addressing the prevalent issue of over-smoothing in these networks. The diffusion theory, as conceptualized in \cite{chamberlain2021grand}, likens information propagation to the diffusion process of substances. The Beltrami diffusion models, as utilized in \cite{charoweyn:blend2021,SonKanWan:C22}, have been pivotal in improving the rewiring and robustness of graphs. Concurrently, ACMP \cite{wang2022acmp} draws inspiration from particle reaction-diffusion processes, accounting for both repulsive and attractive interactions among particles. The graph CDE model, as outlined in \cite{zhao2023cde}, addresses heterophilic graph challenges, inspired by convection-diffusion processes. Similarly, GREAD \cite{choi2023gread} introduces an approach based on reaction-diffusion equations, tailored to effectively manage heterophilic datasets.
GRAND++ \cite{thorpe2021grand++} employs heat diffusion with sources for training models more efficiently, especially when there is a scarcity of labeled data. Further enriching this field, recent research \cite{ZhaKanSon:C23b,KanZhaSon:C23} adopts the Hamiltonian mechanism for updating node features, thereby augmenting the networks' adaptability to graph structures and enhancing their robustness.
\subsection{Adversarial Attacks and Defenses on Graphs}
A plethora of research has consistently underscored the vulnerability of graph deep learning models to adversarial perturbations. Essentially, even inconspicuous alterations to the input data can misdirect a graph neural network into producing fallacious predictions. Adversarial attacks on GNNs typically fall into two categories based on the method of perturbation: Graph Modification Attacks (GMA) and Graph Injection Attacks (GIA).
GMA involves manipulating the topology of a graph, primarily by adding or removing edges \cite{Chen2018FastGA,WaniekNHB2018,Du2017TopologyAG,maKDD2021,softmehiangcn}. This category also encompasses perturbations to node features \cite{zugnernettack,zugner_adversarial_2019,maKDD2021, ma2022adversarial,finkelshtein2022single}.
In contrast, Graph Injection Attacks (GIA) permit adversaries to incorporate malicious nodes into the original graph \cite{Wang2020ScalableAO,zouKDD2021,sunadvinject,hussain2022adversarial,chen2022hao}. GIA is considered a stronger form of attack on graph data \cite{chen2022hao} as it introduces both structural and feature perturbations to the graph.

The defensive strategies employed in GNNs can be broadly categorized into pre-processing methods and the design of robust architectures.
Methods such as GNN-GUARD \cite{zhang2020gnnguard}, Pro-GNN \cite{jin2020prognn},GARNET\cite{deng2022garnet} and GCN-SVD \cite{Entezarigcnsvd} focus on cleansing or pruning the graph, with the aim of maintaining the integrity of the original adjacency matrix, thereby mitigating perturbations. On another front, methods like RGCN \cite{zhu2019robustgcn} and Soft-Median-GCN \cite{softmehiangcn} are tailored to enhance the inherent architecture of GNNs, making them more resilient to feature perturbations. Distinctly, our approach diverges from these conventional defense mechanisms. \emph{Instead of proposing an entirely new defensive technique, our focus is on bolstering the robustness of existing graph neural ODE models by seamlessly integrating the principles of FDEs.}

\section{Preliminaries}
\subsection{Notation}
Let us consider a graph $\calG = (\calV, \bW)$, in which $\calV = \{1, \dots, N\}$ represents a set of $N$ nodes. The $N \times N$ matrix $\bW := \left(W_{ij}\right)$ has elements $W_{ij}$ indicating the original edge weight between the $i$-th and $j$-th nodes with $W_{ij} = W_{ji}$.
The node features at any given time $t$ can be denoted by $\bX(t) \in \Real^{|\calV|\times N}$, where $N$ corresponds to the dimension of the node feature. 
In this matrix, the feature vector for the $i$-th node in $\calV$ at time $t$ can be represented as the $i$-th row of $\bX(t)$, indicated by $\bx\T_i(t)$.

\subsection{Graph Neural ODE Models} 
Existing research encompasses various continuous dynamics-informed GNNs, with unique configurations of $\calF$ in \cref{eq.graphode} tailored for graph dynamics. This section provides a succinct overview of several graph neural ODE models that we will employ in this study. For an extensive review of these and related GNNs, readers are referred to a recent comprehensive survey \cite{han2023continuous}.

GRAND \cite{chamberlain2021grand} incorporates the following dynamical system for graph learning:
\begin{align}
\frac{{\ud} \bX(t)}{{\ud} t}
&= \operatorname{div}(D(\bX(t), t) \odot \nabla \bX(t))\nn
& = (\mathbf{A}(\bX(t))-\mathbf{I}) \bX(t) \label{eq.GRAND}
\end{align}
The initial condition $\bX(0)$ is provided by the graph input features. Here, $\odot$ represents the element-wise product, and $D$ is a diagonal matrix with elements $\diag(a(\bx_i(t), \bx_j(t)))$. The function $a(\cdot)$ serves as a measure of similarity for node pairs $(i,j)$ linked by an edge, that is, when $W_{ij}\ne 0$. 
As such, the diffusion equation can be reframed as \cref{eq.GRAND}, where $\mathbf{A}(\bX(t))= \left(a\left(\bx_i(t), \bx_j(t)\right)\right)$ constitutes a learnable attention matrix to depict the graph structure. $\mathbf{I}$ is the identity matrix.
One way to calculate $a(\bx_i,\bx_j)$ is based on the Transformer attention \cite{vaswani2017attention}:
\begin{align}
    a(\bx_i,\bx_j) = \mathrm{softmax} \left(\frac{(\mathbf{W}_K \bx_{i} )^\top \mathbf{W}_Q \bx_{j}}{d_k}\right)
\end{align}
where $\mathbf{W}_K$ and $\mathbf{W}_Q$ are learned matrices, and $d_k$ is a hyperparameter determining the dimension of $\mathbf{W}_K$. 

By extending the concepts of Beltrami flow \cite{sochenTIP1998beltrami,SonKanWan:C22}, a stable graph neural flow GraphBel is formulated as:
 \begin{align} \label{eq:GraphBel}
\frac{{\rm d} \mathbf{X}(t)}{{\rm d} t} = (\mathbf{A_S}(\mathbf{X}(t)) \odot \mathbf{B_S}(\mathbf{X}(t)) - \Psi(\mathbf{X}(t))) \mathbf{X}(t)
\end{align}
where $\odot$ represents element-wise multiplication. Both $\mathbf{A_S}(\cdot)$ and $\mathbf{B_S}(\cdot)$ serve distinct purposes: the former acts as a learnable attention function, while the latter operates as a normalized vector map. $\mathbf{\Psi}(\mathbf{X}(t))$ is a diagonal matrix where $\Psi(\mathbf{x}_i, \mathbf{x}_i)=\sum_{\mathbf{x}_j}(\mathbf{A_S} \odot \mathbf{B_S})(\mathbf{x}_i, \mathbf{x}_j)$. 

Using a graph coupled dynamical system, GraphCON \cite{ruscharow:graphcon2022} is given by
\begin{align}
\begin{aligned}
& \frac{{\ud} \mathbf{Y}(t)}{{\ud} t} = \sigma ({\mathbf{F}_{\theta}(\mathbf{X}(t), t)) - \gamma\mathbf{X}(t) -\alpha\mathbf{Y}(t)} \\
& \frac{{\ud} \mathbf{X}(t)}{{\ud} t} = \mathbf{Y}(t) \label{eq.graphcon}
\end{aligned}
\end{align}
where $\mathbf{F}_{\theta}(\cdot)$ is a learnable $1$-neighborhood coupling function, $\sigma$ denotes an activation function, $\gamma$ and $\alpha$ are adjustable parameters.   

\begin{Remark}
By leveraging the numerical solvers introduced in \cite{chen2018neural}, one can efficiently solve \cref{eq.GRAND}, \cref{eq:GraphBel}, and \cref{eq.graphcon} where the initial $\bX(0)$ represents the input features. This yields the terminal node embeddings, denoted as $\mathbf{X}(T)$, at time $T$. Subsequently, $\mathbf{X}(T)$ can be utilized for downstream tasks such as node classification or link prediction.
\end{Remark}

\subsection{Fractional-Order Differential Equation}\label{ssec.fde}
Within the FROND framework, the fractional time derivative is typically characterized using the Caputo derivative \cite{caputo1967fracderiva}, a prevalent choice for modeling real-world phenomena \cite{diethelm2010frationalde}. It is expressed as:
\begin{align}
    D_t^\beta f(t) = \frac{1}{\Gamma(n-\beta)} \int_0^t (t-\tau)^{n-\beta-1} \frac{\ud^n f}{\ud \tau^n} \ud\tau, \label{eq.cap}
\end{align}
where $\beta$ is the fractional order, $n$ is the smallest integer greater than $\beta$, $\Gamma$ is the gamma function, $f$ is a scalar function defined over some interval that includes $[0,t]$, and $\frac{\ud^n f}{\ud \tau^n}$ is the standard $n$-th order derivative. 
A distinguishing trait of the Caputo derivative is its capability to incorporate \emph{memory effects}. This is underscored by observing that the fractional derivative at time $t$ in \cref{eq.cap} \emph{aggregates historical states spanning the interval $0\le \tau\le t$.}
For the special case where $\beta=1$, the definition collapses to the standard first-order derivative as $D_t^\beta f = \frac{\ud f}{\ud \tau}$. 
For a vector-valued function, the fractional derivative is defined component-wise for each dimension, similar to the integer-order derivative. Thus, while our discussion centers on scalar functions in \cref{ssec.fde,ssec.solver}, its extension to vector-valued functions is straightforward.
A more detailed, self-contained exposition of the Caputo derivative can be found in the supplementary material.

A crucial concept in fractional calculus and its applications is the Mittag-Leffler function $E_{\beta}(z)$ \cite{diethelm2010frationalde}. Recognized as a natural extension of the exponential function within fractional domains, it enables the modeling of complex phenomena with increased sophistication. 
The Mittag-Leffler function is a crucial component in the solutions to numerous fractional differential equations, thus playing an essential role in the analysis and application of such systems.
Specifically, as per \cite{diethelm2010frationalde}[Theorem 4.3], given $y(t):=E_n\left(\lambda t^\beta\right)$, $x \geq 0$, then 
\begin{align}
D_t^\beta y(t)=\lambda y(t). \label{eq.eigen_frac}
\end{align}
We present the formal definition of the Mittag-Leffler function below:
\begin{Definition}[Mittag-Leffler function]\label{def.ml}
Let $\beta > 0$. The function $E_{\beta}$ defined by
\begin{align}
E_{\beta}(z) \coloneqq \sum_{j=0}^{\infty} \frac{z^j}{\Gamma (j{\beta}+1)},
\end{align}
whenever the series converges, is called the Mittag-Leffler function of order $\beta$.
\end{Definition}
The extension of the Mittag-Leffler function from the exponential function is apparent when considering the case $\beta=1$, which simplifies to the well-known exponential function:
\begin{align}
E_1(z) = \sum_{j=0}^{\infty} \frac{z^j}{\Gamma (j+1)} = \sum_{j=0}^{\infty} \frac{z^j}{j!} = \exp(z).
\end{align}
It is also well-recognized that $\exp (z)$ acts as the eigenfunction for ODEs. Specifically, $\exp(\lambda t)$ solves \cref{eq.eigen_frac} for $\beta=1$, assuming appropriate initial conditions are met.
\subsection{Numerical Solvers for FDEs}\label{ssec.solver}
In the context of discrete numerical solvers, FDEs can be solved analogously to ODEs as illustrated in \cite{chen2018neural}. Particularly noteworthy is the fractional Adams–Bashforth–Moulton method solver, which shares similarities with the Adams–Bashforth–Moulton technique for ODEs, as expounded in \cite{diethelm2004solvers}.

For clarity, consider an FDE characterized by $\beta\in (0,1]$:
\begin{align}
       D_t^\beta y(t) = f(t, y(t)), \quad y(0) = y_0.
\end{align}
Here, $f(t, y(t))$ delineates the dynamics of the system, and $y_0$ specifies the initial condition at $t=0$.

Delving into numerical approximations and drawing upon \cite{diethelm2004solvers}, the basic predictor solution $y^P_{k+1}$ (where $^P$ signifies the concept of the "Predictor") derives from the fractional Adams--Bashforth method as:
\begin{align}
  y^P(t_{k+1}) = y_0 + \frac{1}{\Gamma(\beta)} \sum_{j=0}^{k} b_{j,k+1}(\beta) f(t_j, y_j).   \label{eq.bjk}
\end{align}
In this context, $k$ stands for the current iteration or time step in the discretization sequence. Further, with $h$ denoting the step size or time interval between subsequent approximations, $t_j$ is given by $t_j=hj$. The coefficients $b_{j,k+1}(\beta)$, expressed as functions of $\beta$, are elaborated in the supplementary material. The reader is directed to \cref{fig.block}, where the role of $b_{j,k+1}(\beta)$ as a weighted skip connection in time discretization, underscoring memory effects, is evident.

\section{Methodology} \label{sec.metho}
In this section, we present the theoretical analysis of the output boundary of \cref{eq.frac_gra_difaaa} under specific perturbation scenarios. By leveraging the characteristics of the Mittag-Leffler function, we detail the response of the FROND, noting that it undergoes smaller output perturbations compared to the graph neural ODE framework when exposed to the same disturbances. We furnish three pivotal theorems underscoring the inherent resilience of the FROND paradigm:
\begin{itemize}
    \item Theorem \ref{theo:feature_pert} \cite{diethelm2010frationalde} establishes the output perturbation bounds of the FDEs under small perturbations in the initial conditions, which, in our case, correspond to input feature changes of FROND models. 
    \item Theorem \ref{theo:function_pert} \cite{diethelm2010frationalde} extends the discussion to include perturbations in the function that governs the system's dynamics. In the context of graph learning, such perturbations include the changes in the topology of the graph, which can occur due to the addition, deletion, or modification of edges. 
    \item Finally, Theorem \ref{theo:ml_mono} provides important insights into how the choice of the fractional order $\beta$ can influence the system's robustness. Our analysis suggests a monotonic relationship between the model's perturbation bounds and the parameter $\beta$, with smaller $\beta$ values indicating augmented robustness.
\end{itemize}
 Together, these results provide a strong theoretical basis for the robustness of FROND, setting the stage for its deployment in various practical applications.

\subsection{FROND: Graph Neural FDE Framework}
Building upon the foundation of FDEs, recall that FROND incorporates the Caputo fractional-order time derivative into the model for feature evolution:
\begin{align}
D_t^\beta \bX(t)= \calF(\bW,\bX(t)),\
\bX(0) = \bX_{0}, \ 0<\beta\le 1. \label{eq.frac_gra_difaaa}
\end{align}
In equation \eqref{eq.frac_gra_difaaa}, $D_t^\beta \bX(t)$ represents the fractional derivative of the state $\bX(t)$ with respect to feature evolution time $t$, where $\beta$ is a real number in the interval $(0, 1]$. This fractional derivative introduces memory effects, which enrich the capacity of the model to interpret complex patterns in data. The term $\calF(\bW,\bX(t))$ denotes the dynamic function modeling the interactions between nodes given graph topology $\bW$. 
With the system initialized from input node features as $\bX(0) = \bX_{0}$, the model's output is $\bX(T)$ at a specified time $T$.

One intrinsic characteristic of FROND is the long-memory property from fractional derivative. This property encapsulates the system's ability to ``remember'' its historical states. This inherent memory effect contributes significantly to the robustness of the system, particularly when faced with perturbations.
When the system encounters disturbances or noise, the extensive memory of FROND serves as a protective buffer. It mitigates the immediate effects of these disruptions by integrating the system's past states into its response, rather than amplifying the disturbances.

\subsection{Robustness of FROND under Perturbation}
In this subsection, we delve into a theoretical analysis of the model output perturbations. We begin by highlighting two central theorems from \cite{diethelm2010frationalde}, which provide bounds on output perturbations, grounded in the properties of the Mittag-Leffler function. Following this, we analyze the bound outlined in \cref{theo:ml_mono}, shedding light on the notion that a smaller $\beta$ contributes to enhanced robustness of the model, particularly when faced with perturbations in input node features and graph topology.
\begin{figure}
    \centering
    \includegraphics[width=0.45\textwidth]{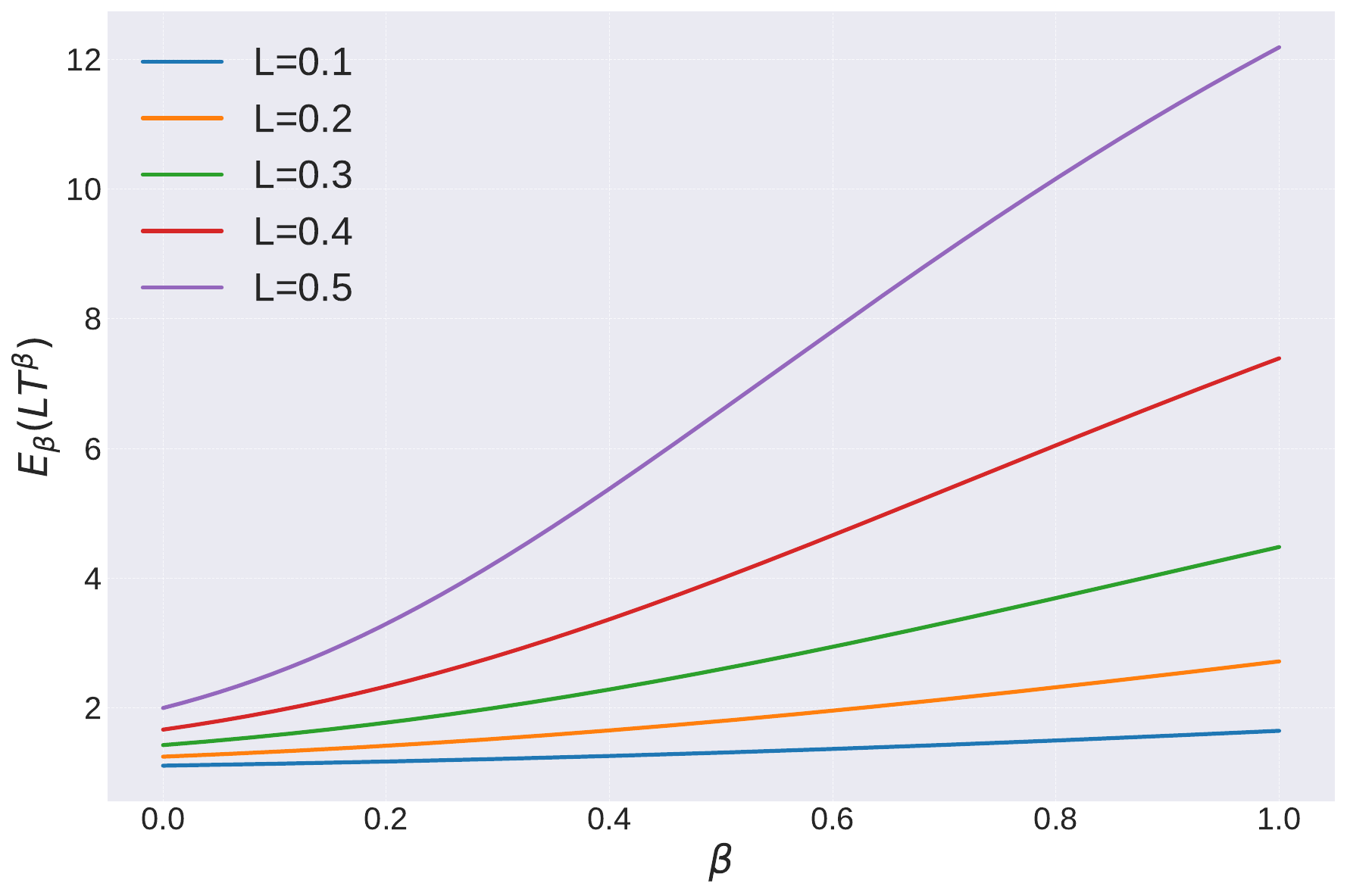}
    \caption{Plot of the Mittag-Leffler function $E_{\beta}(L T^{\beta})$ against $\beta$ with $T=10$. Distinctively, for varying $L$, it displays  monotonic increase over interval $[\epsilon,1]$. 
    }
    \label{fig:enter-label}
\end{figure}
\begin{Theorem}\label{theo:feature_pert} \cite[Theorem 6.20]{diethelm2010frationalde} 
Let $\bX(t)$  be the solution of the initial value problem \cref{eq.frac_gra_difaaa}, and let $\Tilde{\bX}(t)$ be the solution of the initial value problem
\begin{align}
\begin{aligned}
D_t^\beta \Tilde{\bX}(t) &= \calF(\bW,\Tilde{\bX}(t)),  \\
\Tilde{\bX}(0) &=  \Tilde{\bX}_{0},
\end{aligned}
\end{align}
where \(\varepsilon := \|\bX_{0} - \Tilde{\bX}_{0}\|\). Then, if \(\varepsilon\) is sufficiently small, there exists some \(h > 0\) such that both the functions \(\bX\) and \(\Tilde{\bX}\) are defined on \([0, h]\), and
\begin{align}
  \sup_{0\leq t\leq h} \|\bX(t)-\Tilde{\bX}(t)\| = c_1\varepsilon E_{\beta}(Lh^{\beta})  
\end{align}
where ${L}$ is the Lipschitz constant of $\calF$ and $c_1$ is a constant.
\end{Theorem}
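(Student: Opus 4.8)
The plan is to convert the Caputo initial value problem into an equivalent Volterra integral equation of the second kind and then apply a fractional Grönwall inequality whose kernel naturally produces the Mittag--Leffler function. First I would record the standard equivalence, valid for $0<\beta\le 1$ with $\calF$ continuous: $\bX$ solves \cref{eq.frac_gra_difaaa} on an interval $[0,h]$ if and only if
\begin{align}
\bX(t) = \bX_0 + \frac{1}{\Gamma(\beta)}\int_0^t (t-s)^{\beta-1}\,\calF(\bW,\bX(s))\ud s,
\end{align}
and similarly $\tilde{\bX}(t) = \tilde{\bX}_0 + \frac{1}{\Gamma(\beta)}\int_0^t (t-s)^{\beta-1}\,\calF(\bW,\tilde{\bX}(s))\ud s$. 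Local existence of both solutions on a common interval $[0,h]$ follows from a fixed-point argument (for instance Weissinger's fixed-point theorem) applied to the integral operator on a closed ball in $C([0,h])$; because $\varepsilon$ is small, $\tilde{\bX}_0$ lies in any prescribed neighbourhood of $\bX_0$ on which $\calF$ is bounded and $L$-Lipschitz in its second argument, so $h$ can be chosen uniformly in $\varepsilon$.

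Next I would subtract the two integral identities, take norms, and use the Lipschitz bound $\|\calF(\bW,\bx)-\calF(\bW,\by)\|\le L\|\bx-\by\|$ to obtain, with $u(t):=\|\bX(t)-\tilde{\bX}(t)\|$,
\begin{align}
u(t) \le \varepsilon + \frac{L}{\Gamma(\beta)}\int_0^t (t-s)^{\beta-1}u(s)\ud s, \qquad 0\le t\le h.
\end{align}
The decisive step is the generalized Grönwall inequality: writing $(Bu)(t):=\frac{L}{\Gamma(\beta)}\int_0^t(t-s)^{\beta-1}u(s)\ud s$ and iterating the bound gives $u\le \varepsilon\sum_{k=0}^{n}(B^k\mathbf{1}) + B^{n+1}u$; using the semigroup (Beta-integral) identity one shows by induction that $(B^k\mathbf{1})(t)=(Lt^\beta)^k/\Gamma(k\beta+1)$, and the same estimate forces $B^{n+1}u\to 0$ uniformly on $[0,h]$. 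Letting $n\to\infty$ yields $u(t)\le \varepsilon E_\beta(Lt^\beta)$, and since $E_\beta$ has nonnegative Taylor coefficients it is nondecreasing on $[0,\infty)$, so $\sup_{0\le t\le h}\|\bX(t)-\tilde{\bX}(t)\|\le \varepsilon E_\beta(Lh^\beta)$. This is the claimed bound with $c_1=1$; a general constant $c_1$ merely allows slack (e.g.\ to accommodate an equivalent-norm constant or a weaker local-existence argument).

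I expect the main obstacle to be the Grönwall step, and within it the verification of the closed form $(B^k\mathbf{1})(t)=(Lt^\beta)^k/\Gamma(k\beta+1)$, which rests on the identity $\int_0^t (t-s)^{\beta-1}s^{k\beta}\ud s = \Gamma(\beta)\Gamma(k\beta+1)\,t^{(k+1)\beta}/\Gamma((k+1)\beta+1)$ together with induction, plus the convergence of $\sum_k (Lh^\beta)^k/\Gamma(k\beta+1)=E_\beta(Lh^\beta)$, which both defines the limit and kills the remainder term. Everything else --- the integral-equation reformulation, local existence, and the Lipschitz estimate --- is routine under the standing hypotheses that $\calF$ is continuous and Lipschitz in its second argument.
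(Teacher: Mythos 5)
Your proposal is correct and follows essentially the same route as the source the paper cites for this result (Diethelm's Theorem 6.20, whose proof the paper does not reproduce): rewrite the Caputo initial value problem as a Volterra integral equation of the second kind, subtract the two representations, and apply the fractional Gr\"onwall inequality, whose iterated-kernel proof via the Beta-integral identity produces exactly the Mittag--Leffler series. The only cosmetic discrepancy is that the paper states the conclusion as an equality with a constant $c_1$, whereas the argument (correctly) delivers the one-sided bound $\sup_{0\le t\le h}\|\bX(t)-\tilde{\bX}(t)\|\le \varepsilon E_{\beta}(Lh^{\beta})$, i.e.\ $c_1=1$, which is all that is needed for the robustness claims downstream.
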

\begin{Remark}
   In our FROND framework, \cref{theo:feature_pert} provides an upper bound for the perturbation in system trajectory. This encompasses the perturbation of the FROND output $\bX(T)$ at time $T$ if $T<h$, in situations of slight perturbations to the input features, specifically when the system's initial input features shift to $\Tilde{\bX}_{0}$.
\end{Remark}

\begin{Theorem} \cite[Theorem 6.21]{diethelm2010frationalde} \label{theo:function_pert}
Let $\bX(t)$  be the solution of the initial value problem \cref{eq.frac_gra_difaaa}, and let $\Tilde{\bX}(t)$ be the solution of the initial value problem
\begin{equation}
\begin{aligned}
D_t^\beta \Tilde{\bX}(t) &= \Tilde\calF(\Tilde{\bW},{\bX}(t)),  \\
\Tilde{\bX}(0) &=  {\bX}_{0}
\end{aligned}
\end{equation}
Moreover, let \(\varepsilon \coloneqq \sup_{\bX\in A} \|\calF(\bW,{\bX}) - \Tilde\calF(\Tilde{\bW},{\bX})\|\), with $A$ being an appropriate compact set where solutions for both systems exist. Then, if $\varepsilon$ is sufficiently small, there exists some \(h > 0\) such that both the functions \(\bX\) and \(\Tilde{\bX}\) are defined on \([0, h]\), and
\begin{align}
  \sup_{0\leq t\leq h} \|\bX(t)-\Tilde{\bX}(t)\| = c_2\varepsilon E_{\beta}(Lh^{\beta}), 
\end{align}
where ${L}$ is the Lipschitz constant of $\calF$ and $c_2$ is a constant.
\end{Theorem}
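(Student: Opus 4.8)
The plan is to imitate the proof of \cref{theo:feature_pert}: reduce both Caputo initial value problems to equivalent weakly singular Volterra integral equations, subtract them, and close the resulting estimate with a generalized (fractional) Gronwall inequality whose solution is precisely a Mittag-Leffler function. Throughout I read the perturbed problem as $D_t^\beta\Tilde\bX(t)=\Tilde\calF(\Tilde{\bW},\Tilde\bX(t))$, $\Tilde\bX(0)=\bX_{0}$.

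Since $0<\beta\le 1$, a continuous function $\bX$ solves \cref{eq.frac_gra_difaaa} if and only if
\begin{align}
\bX(t)=\bX_{0}+\frac{1}{\Gamma(\beta)}\int_0^t (t-\tau)^{\beta-1}\calF(\bW,\bX(\tau))\ud\tau,
\end{align}
and similarly for $\Tilde\bX$ with $(\calF,\bW)$ replaced by $(\Tilde\calF,\Tilde{\bW})$; this equivalence is the standard reformulation already underlying \cref{eq.bjk}. First I would fix $h>0$ small enough that both trajectories exist on $[0,h]$ and remain inside the compact set $A$: such an $h$ exists because $\bX(0)=\Tilde\bX(0)=\bX_{0}$ lies in the interior of $A$, both trajectories are continuous, and for $\varepsilon$ sufficiently small the perturbed solution stays uniformly close to the unperturbed one over the time span on which the latter remains in $A$ (existence of $\Tilde\bX$ itself follows from standard local existence theory for Caputo FDEs).

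Subtracting the two integral equations cancels the $\bX_{0}$ terms since the initial data coincide. Writing $\delta(t):=\|\bX(t)-\Tilde\bX(t)\|$ and inserting $\pm\,\calF(\bW,\Tilde\bX(\tau))$ into the integrand, the difference of the dynamic terms splits into $\calF(\bW,\bX(\tau))-\calF(\bW,\Tilde\bX(\tau))$, which the Lipschitz property of $\calF$ bounds by $L\,\delta(\tau)$, and $\calF(\bW,\Tilde\bX(\tau))-\Tilde\calF(\Tilde{\bW},\Tilde\bX(\tau))$, which the defining bound of $\varepsilon$ controls by $\varepsilon$ (the argument $\Tilde\bX(\tau)$ lies in $A$). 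Hence, for $t\in[0,h]$,
\begin{align}
\delta(t) &\le \frac{1}{\Gamma(\beta)}\int_0^t(t-\tau)^{\beta-1}\bigl(\varepsilon+L\,\delta(\tau)\bigr)\ud\tau \nonumber\\
&= \frac{\varepsilon\,t^{\beta}}{\Gamma(\beta+1)}+\frac{L}{\Gamma(\beta)}\int_0^t(t-\tau)^{\beta-1}\delta(\tau)\ud\tau,
\end{align}
using $\int_0^t(t-\tau)^{\beta-1}\ud\tau=t^{\beta}/\beta$. Because $t\mapsto\varepsilon\,t^{\beta}/\Gamma(\beta+1)$ is nondecreasing, the generalized Gronwall inequality --- obtained by iterating this weakly singular estimate and summing the resulting series of iterated kernels, which reassembles into $E_{\beta}$, exactly the device used to prove \cref{theo:feature_pert} --- would give $\delta(t)\le\frac{\varepsilon\,t^{\beta}}{\Gamma(\beta+1)}E_{\beta}(L\,t^{\beta})$. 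Since $E_{\beta}$ and $t\mapsto t^{\beta}$ are increasing, taking the supremum over $[0,h]$ yields $\sup_{0\le t\le h}\|\bX(t)-\Tilde\bX(t)\|\le c_2\,\varepsilon\,E_{\beta}(L h^{\beta})$ with $c_2:=h^{\beta}/\Gamma(\beta+1)$; as in \cref{theo:feature_pert}, the displayed ``$=$'' is to be read as ``$\le$''.

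I expect the main obstacle to lie not in the algebra but in two ``soft'' points: (i) the continuation argument producing a common interval $[0,h]$ once $\varepsilon$ is small --- this is exactly where the hypothesis ``$\varepsilon$ sufficiently small'' is used --- and (ii) the justification of the generalized Gronwall step, i.e.\ verifying that the Picard-type iteration of the weakly singular integral inequality converges and that its iterated kernels sum to the Mittag-Leffler function. Everything else is routine, and the argument departs from that of \cref{theo:feature_pert} only in that the inhomogeneous term is now $\varepsilon\,t^{\beta}/\Gamma(\beta+1)$ rather than the constant $\varepsilon$.
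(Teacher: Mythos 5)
Your argument is correct and follows the standard route that the paper (via its citation of Diethelm, Theorem 6.21) relies on: rewrite both Caputo problems as weakly singular Volterra integral equations, insert $\pm\,\calF(\bW,\Tilde\bX(\tau))$ to split the integrand into a Lipschitz term $L\,\delta(\tau)$ and a defect term bounded by $\varepsilon$ on $A$, and close with the fractional Gronwall inequality whose iterated kernels sum to $E_\beta$, yielding $c_2=h^\beta/\Gamma(\beta+1)$ and the stated bound read as an inequality. Your two flagged "soft" points (common existence interval for small $\varepsilon$, and the validity of the weakly singular Gronwall lemma) are exactly the ingredients the cited reference supplies, so no gap remains.
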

\begin{Remark}
   Within the framework of FROND, the discussion relates to how the model's output reacts to perturbations in functional elements (such as learned parameters in $\calF$) and changes in graph structure $\bW$.
    In our paper, we consider topological changes in the graph structure, such as edge additions, deletions, or modifications.
   Further investigation into attacks related to functional perturbations that directly alter the parameters of the neural network $\calF$ is earmarked for future work.
\end{Remark}

\begin{table*}[!ht]
\centering
\fontsize{9pt}{10pt}\selectfont
\setlength{\tabcolsep}{5pt}
\makebox[\textwidth][c]{
\begin{tabular}{ccccccccccc} 
\toprule
Dataset & Ptb(\%) & F-GRAND & GRAND & F-GraphBel  & GraphBel & F-GraphCON  & GraphCON     & GAT & GCN  \\

\midrule
\multirow{6}{*}{Cora}

& 0 &  81.25$\pm$0.89 &  82.24$\pm$1.82 &  79.05$\pm$0.73 & 80.28$\pm$0.87  & 80.91$\pm$0.54 & 83.10$\pm$0.79  & \first{83.97$\pm$0.65} & \second{83.50$\pm$0.44}     \\

& 5 &  78.84$\pm$0.57 & \second{78.97$\pm$0.49} & 76.10$\pm$0.74  & 77.70$\pm$0.66 &  77.80$\pm$0.44 &  77.90$\pm$1.14  & \first{80.44$\pm$0.74} & 76.55$\pm$0.79   \\

& 10 & \first{76.61$\pm$0.68}& \second{75.02$\pm$1.25} & 74.03$\pm$0.47  & 74.30$\pm$0.88 &  74.63$\pm$1.42 & 72.53$\pm$1.08 & 70.39$\pm$1.28 & 70.39$\pm$1.28     \\

& 15&  \first{73.42$\pm$0.97} & 71.43$\pm$1.09 & \second{73.01$\pm$0.75}  & 72.14$\pm$0.69 & 73.01$\pm$0.78 & 69.83$\pm$0.68 & 65.10$\pm$0.71 & 65.10$\pm$0.71     \\

& 20 &  \second{69.27$\pm$2.10} & 60.53$\pm$1.99 & \first{69.35$\pm$1.23}  & 65.41$\pm$0.99 & 69.23$\pm$1.35 & 57.28$\pm$1.62 & 59.56$\pm$2.72 & 59.56$\pm$2.72     \\

& 25 &  64.47$\pm$1.83 & 55.26$\pm$2.14 & \first{67.63$\pm$0.93} & 62.31$\pm$1.13 &  \second{65.27$\pm$1.33} & 53.17$\pm$1.52 & 47.53$\pm$1.96 & 47.53$\pm$1.96   \\
\midrule
\multirow{6}{*}{Citeseer}

& 0 &  71.37$\pm$1.34 &  71.50$\pm$1.10 & 68.90$\pm$1.15  & 69.46$\pm$1.15 & 71.49$\pm$0.71 & 70.48$\pm$1.18 & \first{73.26$\pm$0.83} & \second{71.96$\pm$0.55}    \\

& 5 &  \second{71.47$\pm$0.96} &  71.04$\pm$1.15 &  68.36$\pm$0.93  & 68.45$\pm$1.02 & 70.77$\pm$1.15 & 69.75$\pm$1.63 &  \first{72.89$\pm$0.83} & 70.88$\pm$0.62    \\

& 10 &  \second{69.76$\pm$0.71} & 68.88$\pm$0.60  & 67.22$\pm$1.52  & 66.72$\pm$1.31 & 69.54$\pm$0.82 & 67.40$\pm$1.78 & \first{70.63$\pm$0.48} & 67.55$\pm$0.89   \\

& 15&  \second{67.94$\pm$1.42} &  66.35$\pm$1.37  &  63.56$\pm$1.95  & 63.63$\pm$1.67 & 67.37$\pm$0.87 &  65.78$\pm$1.97 & \first{69.02$\pm$1.09} & 64.52$\pm$1.11     \\

& 20 &  \second{64.18$\pm$0.93} & 58.71$\pm$1.42 & 63.38$\pm$0.96  & 58.90$\pm$0.84 & \first{66.52$\pm$0.68} & 56.79$\pm$1.46  & 61.04$\pm$1.52 & 62.03$\pm$3.49   \\

& 25 &  \second{65.46$\pm$1.12} & 60.15$\pm$1.37 & 64.60$\pm$0.48 &  61.24$\pm$1.28 & \first{66.72$\pm$1.12} & 57.30$\pm$1.38  & 61.85$\pm$1.12 & 56.94$\pm$2.09    \\

\midrule
\multirow{7}{*}{Pubmed}

& 0 &  \first{87.28$\pm$0.23} & 85.06$\pm$0.26 & 86.34$\pm$0.15  & 84.02$\pm$0.26 &  87.12$\pm$0.21 &  84.65$\pm$0.13 & 83.73$\pm$0.40 & \second{87.19$\pm$0.09}  \\

& 5 & \first{87.05$\pm$0.17} & 84.11$\pm$0.30 & \second{86.17$\pm$0.12} & 83.91$\pm$0.26 &  86.72$\pm$0.23 & 83.06$\pm$0.22  & 78.00$\pm$0.44 & 83.09$\pm$0.13   \\

& 10 &  \first{86.74$\pm$0.23} & 84.24$\pm$0.18 & 86.01$\pm$0.18 & 84.62$\pm$0.26 & \second{86.64$\pm$0.20} & 82.25$\pm$0.12  & 74.93$\pm$0.38 & 81.21$\pm$0.09    \\

& 15 &  \second{86.51$\pm$0.14} & 83.74$\pm$0.34 & 85.92$\pm$0.13  & 84.83$\pm$0.20 &  \first{86.40$\pm$0.14} & 81.26$\pm$0.33 & 71.13$\pm$0.51 & 78.66$\pm$0.12   \\

& 20 &  \first{86.50$\pm$0.12} & 83.58$\pm$0.20 & 85.73$\pm$0.18 & 84.89$\pm$0.45 & \second{86.32$\pm$0.12} &  81.58$\pm$0.41 & 68.21$\pm$0.96 & 77.35$\pm$0.19  \\

& 25 &  \first{86.47$\pm$0.15} & 83.66$\pm$0.25 & 86.11$\pm$0.30  & 85.07$\pm$0.15 &  \second{86.15$\pm$0.26} & 80.75$\pm$0.32  & 65.41$\pm$0.77 & 75.50$\pm$0.17  \\

\bottomrule
\end{tabular}} 
\caption{Node classification accuracy (\%) under { modification, poisoning, non-targeted}  attack (Metattack) in {transductive} learning.  The best and the second-best results for each criterion are highlighted in bold and underlined, respectively. }
 \label{tab:metattack} 
\end{table*}

\begin{Theorem}\label{theo:ml_mono}
Let $f(\beta)=E_{\beta}(L T^{\beta})$. For any $\epsilon>0$, if $T$ is sufficiently large and $L<1$,  $f(\beta)$ is monotonically increasing on the interval $[\epsilon,1]$. 
\end{Theorem}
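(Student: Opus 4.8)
Since $E_{\beta}(z)=\sum_{j\ge 0}z^{j}/\Gamma(j\beta+1)$ converges locally uniformly in $\beta$ together with all its $\beta$-derivatives (the factor $1/\Gamma(j\beta+1)$ decays super-exponentially in $j$ once $\beta$ is bounded away from $0$), $f$ is $C^{1}$ on $(0,\infty)$ and it suffices to show $f'(\beta)>0$ for every $\beta\in[\epsilon,1]$ as soon as $T$ is large. Writing $a_{j}=a_{j}(\beta):=L^{j}T^{j\beta}/\Gamma(j\beta+1)$ and differentiating the series term by term,
\begin{align}
f'(\beta)=\sum_{j\ge 1} j\,a_{j}\bigl(\ln T-\psi(j\beta+1)\bigr),
\end{align}
where $\psi:=\Gamma'/\Gamma$ is the digamma function. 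A purely termwise argument is hopeless here: $\psi(j\beta+1)\to\infty$ as $j\to\infty$, so infinitely many summands are negative and the content of the theorem is that the positive ones dominate. Introducing the probability weights $w_{j}:=j a_{j}\big/\sum_{i\ge 1} i a_{i}$, the inequality $f'(\beta)>0$ is equivalent to $\sum_{j\ge 1} w_{j}\,\psi(j\beta+1)<\ln T$, i.e.\ the $w$-weighted average of $\psi(j\beta+1)$ must lie strictly below $\ln T$, uniformly in $\beta$.

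The plan is a quantitative Laplace/saddle-point analysis of the weights. From the ratio $\frac{(j+1)a_{j+1}}{j a_{j}}=\frac{j+1}{j}\cdot\frac{L T^{\beta}\,\Gamma(j\beta+1)}{\Gamma(j\beta+\beta+1)}$ and two-sided Gautschi--Kershaw bounds giving $\Gamma(j\beta+1)/\Gamma(j\beta+\beta+1)=(j\beta)^{-\beta}\bigl(1+o(1)\bigr)$ uniformly, I would show $j\mapsto j a_{j}$ is unimodal with maximum near $j^{\star}(\beta)\approx L^{1/\beta}T/\beta$, and that the peak has multiplicative width $O(T^{-1/2})$ uniformly for $\beta\in[\epsilon,1]$ (from $\tfrac{d^{2}}{dj^{2}}\log a_{j}\approx-\beta^{2}\psi'(j\beta+1)\approx-\beta/j$). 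Hence $w$ concentrates on a thin window with $j/j^{\star}=1+o(1)$, and on that window $\psi(j\beta+1)=\ln(j\beta)+O\!\bigl((j\beta)^{-1}\bigr)=\ln(j^{\star}\beta)+o(1)=\ln T+\tfrac1\beta\ln L+o(1)$. Since $0<L<1$ and $\beta\le 1$ we get $\tfrac1\beta\ln L\le\ln L<0$, so the bulk contributes at most $\ln T-\tfrac12\lvert\ln L\rvert$ to the average, with a gap $\lvert\ln L\rvert=\ln(1/L)$ that is uniform over $[\epsilon,1]$.

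It remains to discard the far tails: the weights decay super-exponentially past $j^{\star}$ (again from the ratio, since $LT^{\beta}(2j^{\star}\beta)^{-\beta}=2^{-\beta}<1$ for $\beta\ge\epsilon$), while $\psi(j\beta+1)\le\ln(j\beta+1)\le j$ grows only logarithmically, so $\sum_{j\notin\text{window}} w_{j}\,\psi(j\beta+1)=o(1)$; the small-$j$ part is a fixed finite sum and is negligible against $w_{j^{\star}}$. Combining, $\sum_{j\ge 1} w_{j}\,\psi(j\beta+1)\le\ln T-\tfrac12\lvert\ln L\rvert$ for all $\beta\in[\epsilon,1]$ once $T\ge T_{0}(\epsilon,L)$, whence $f'(\beta)\ge\tfrac12\lvert\ln L\rvert\sum_{i\ge 1} i a_{i}>0$. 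The main obstacle is precisely making these saddle-point estimates \emph{explicit and uniform in $\beta$}: one has to track constants in the Stirling bounds for $\Gamma(j\beta+1)$ and in $\ln x-\tfrac1x<\psi(x)<\ln x$ so that the peak location, the gap $\lvert\ln L\rvert$, and the tail decay rate produce a threshold $T_{0}$ that does not blow up as $\beta\downarrow\epsilon$ (it may, and will, depend on $\epsilon$ through $L^{-1/\epsilon}$, which is the allowed dependence).

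A shorter but less self-contained route is to invoke the classical asymptotic $E_{\beta}(z)=\tfrac1\beta e^{z^{1/\beta}}+O(z^{-1})$, valid uniformly for $\beta$ in compact subsets of $(0,2)$. With $z=LT^{\beta}\ge LT^{\epsilon}\to\infty$ this motivates comparing $f$ with $g(\beta):=\tfrac1\beta e^{L^{1/\beta}T}$, for which a one-line computation gives $(\log g)'(\beta)=-\tfrac1\beta+T\tfrac{\lvert\ln L\rvert}{\beta^{2}}L^{1/\beta}$, so $g'(\beta)>0\iff T>\tfrac{\beta L^{-1/\beta}}{\lvert\ln L\rvert}$, and the right-hand side is at most $L^{-1/\epsilon}/\lvert\ln L\rvert$ on $[\epsilon,1]$ — hence $g$ is strictly increasing there for large $T$. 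The delicate point in this route is to differentiate the asymptotic expansion with respect to $\beta$ and show the derivative of the $O(z^{-1})$ remainder is negligible against $g'$, which is why I would prefer to carry out the direct series argument above.
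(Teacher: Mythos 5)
Your main argument is correct in outline and takes a genuinely different route from the paper. The paper starts from Podlubny's exact representation $E_{\beta}(z)=\tfrac{1}{\beta}\exp\left(z^{1/\beta}\right)+\tfrac{1}{2\beta\pi i}\int_{\gamma(1,\varphi)}\tfrac{\exp(\zeta^{1/\beta})}{\zeta-z}\,\mathrm{d}\zeta$ (valid for $|z|>1$), differentiates in $\beta$, shows the derivative of the exponential leading term is positive and grows without bound in $T$ uniformly on $[\epsilon,1]$, and bounds the derivative of the contour integral by a constant via a uniform lower bound $|\zeta-LT^{\beta}|\geq \sin(\delta)\,LT^{\beta}$ on the contour. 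This is precisely your ``shorter but less self-contained route,'' and the delicate point you flag there --- controlling the $\beta$-derivative of the remainder --- is exactly what the paper resolves by working with the exact integral remainder (differentiation under the integral sign plus the distance estimate) rather than an $O(z^{-1})$ asymptotic. Your primary route instead differentiates the defining series termwise, obtaining $f'(\beta)=\sum_{j\geq1} j a_j\bigl(\ln T-\psi(j\beta+1)\bigr)$, and runs a saddle-point/concentration argument on the weights $ja_j$ around $j^{\star}\approx L^{1/\beta}T/\beta$ to show the weighted average of $\psi(j\beta+1)$ sits below $\ln T$ by a uniform gap of order $|\ln L|$. The key quantitative claims are right: $j\mapsto ja_j$ is log-concave (the ratio is a product of two decreasing factors, using log-convexity of $\Gamma$), the peak location and relative width $O(T^{-1/2})$ follow from $\psi(x)\approx\ln x$ and $\psi'(x)\approx 1/x$, and for all $j$ up to $j^{\star}L^{-1/(2\beta)}$ monotonicity of $\psi$ alone already gives $\psi(j\beta+1)\leq\ln T-\tfrac12|\ln L|+o(1)$, leaving only the super-exponentially light upper tail against a logarithmically growing $\psi$. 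What each approach buys: the paper's is shorter but leans on complex analysis and the Podlubny machinery; yours is elementary and self-contained, yields an explicit positive lower bound $f'(\beta)\gtrsim|\ln L|\sum_i i a_i$, but requires carrying uniform Stirling/digamma constants through the concentration estimates, which you correctly identify as the main labor (and which your sketch does not fully execute --- so as written it is a sound plan rather than a complete proof).
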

\begin{proof}
     See the supplementary material for the proof. 
\end{proof}
\begin{Remark} 
In conjunction with \cref{theo:feature_pert,theo:function_pert}, \cref{theo:ml_mono} shows that the fractional order $\beta$ of the FROND plays a crucial role in the model's robustness. With an appropriately chosen $\beta$, the model can reduce the discrepancy between the clean and perturbed states, thereby improving the robustness. 
Particularly, a smaller $\beta$ is associated with a smaller discrepancy, signifying enhanced robustness of FROND against perturbations when $\beta<1$ compared to graph neural ODE models with $\beta=1$. 
Please refer to \cref{fig:enter-label,fig:beta_model} for an illustration. 
The monotonicity suggests that a larger $\beta$ results in a larger perturbation bound for the FROND solution at time $T$, thus expecting a larger perturbed output at the same time under identical input/graph topology perturbations.
\end{Remark}

\subsection{Algorithms}
Our proposed approach enhances the robustness of integer-order graph neural ODE models by introducing fractional-order derivatives into the model framework. Specifically, we extend three prominent graph neural ODE models, GRAND, GraphBel, and GraphCON through this method.

We upgrade the GRAND framework with a fractional-order derivative, resulting in the Fractional-GRAND (F-GRAND) model. The F-GRAND formulation is as follows:
\begin{align}
D_t^\beta \bX(t)
= (\mathbf{A}(\bX(t))-\mathbf{I}) \bX(t). \label{eq.Fractional-GRAND}
\end{align}

Following a similar approach, the GraphBel model is modified to incorporate a fractional-order derivative, resulting in the Fractional-GraphBel (F-GraphBel) model. This model is expressed as:
\begin{align} \label{eq:fractional-GraphBel}
D_t^\beta \mathbf{X}(t) = (\mathbf{A_S}(\mathbf{X}(t)) \odot \mathbf{B_S}(\mathbf{X}(t)) - \Psi(\mathbf{X}(t))) \mathbf{X}(t).
\end{align}
Additionally, we introduce the Fractional-GraphCON (F-GraphCON) model, described by the following equations:
\begin{align}
\begin{aligned}
& D_t^\beta \mathbf{Y}(t) = \sigma ({\mathbf{F}_{\theta}(\mathbf{X}(t), t)) - \gamma\mathbf{X}(t) -\alpha\mathbf{Y}(t)}, \\
& D_t^\beta \mathbf{X}(t) = \mathbf{Y}(t). \label{eq.fractional-graphcon}
\end{aligned}
\end{align}
The order $\beta$ of these fractional derivatives serves as a hyperparameter, introducing extra flexibility to these models. This flexibility allows for adaptation to specific data characteristics, enhancing the robustness of the learning process.

\begin{table*}[!htp]
\centering
\fontsize{9pt}{10pt}\selectfont
\setlength{\tabcolsep}{3.3pt}
\makebox[\textwidth][c]{
\begin{tabular}{p{1.2cm}cccccccccccc} 
\toprule
Dataset & Attack & F-GRAND  & GRAND & F-GraphBel & GraphBel & F-GraphCON & GraphCON   & GAT   & GCN  \\
\midrule
\multirow{4}{*}{Cora} 
& \emph{clean} &  \first{86.44$\pm$0.31}  &  85.87$\pm$0.59  & 77.55$\pm$0.79  & 79.07$\pm$0.46  &  82.42$\pm$0.89  & 83.10$\pm$0.63  & \second{86.37$\pm$0.56}  &  85.09$\pm$0.26 \\
& PGD & 56.38$\pm$6.39 &  36.80$\pm$1.86  &  \first{69.50$\pm$2.83}  &  {\second{63.93$\pm$3.88}}  & 56.70$\pm$4.36  & 48.38$\pm$2.44  & 38.82$\pm$2.48 &   40.11$\pm$0.70 \\
& TDGIA &  \second{54.88$\pm$6.72} &  40.0$\pm$3.52  & \first{56.94$\pm$1.82}  & 53.22$\pm$2.95  & 54.24$\pm$2.54  & 46.43$\pm$2.82  & 32.76$\pm$3.30 &  40.43$\pm$1.76 \\
& MetaGIA  & 53.36$\pm$5.31 & 37.89$\pm$1.56  & \first{71.98$\pm$1.32}  &  \second{66.74$\pm$3.23}  & 63.97$\pm$2.09  & 52.21$\pm$2.71  &  42.23$\pm$4.19 &   42.52$\pm$0.90 \\
\midrule

\multirow{4}{*}{Citeseer} 
& \emph{clean}  & 71.91$\pm$0.43 & 72.52$\pm$0.73   &  71.09$\pm$0.30  &  \first{74.75$\pm$0.28}  &  73.50$\pm$0.43  & 72.07$\pm$0.93 &  73.10$\pm$0.39 &  \second{74.48$\pm$0.66}   \\
& PGD   & \first{61.26$\pm$1.23} & 42.20$\pm$2.77   & \second{60.78$\pm$2.37}  & 47.73$\pm$5.87  & 54.47$\pm$1.0  &  37.71$\pm$7.0 &  35.12$\pm$12.44 &   30.49$\pm$0.80   \\
& TDGIA   & 50.74$\pm$1.20 & 30.02$\pm$1.33   &  \first{65.52$\pm$0.55}  & 47.88$\pm$1.83  & \second{54.71$\pm$1.69}  & 30.93$\pm$3.00 &  28.64$\pm$4.05 &  28.88$\pm$2.07   \\
& MetaGIA  & \second{55.50$\pm$1.72} & 30.42$\pm$1.87   &  \first{60.85$\pm$1.88}  & 39.13$\pm$1.19  & 48.82$\pm$3.27  & 29.09$\pm$2.01 &  30.17$\pm$2.71 &  32.74$\pm$1.00   \\
\midrule

\multirow{4}{*}{Computers} 
& \emph{clean}  & \first{92.61$\pm$0.20} &  \second{92.53$\pm$0.34}   & 88.02$\pm$0.24  & 88.12$\pm$0.33 & 91.86$\pm$0.38  & 91.30$\pm$0.20   &  91.42$\pm$0.22 &  91.83$\pm$0.25 \\
& PGD    & \second{89.90$\pm$1.33} &  70.45$\pm$11.03   & 87.60$\pm$0.33  & 87.38$\pm$0.37 & \first{91.36$\pm$0.74}  & 81.28$\pm$7.99   &  38.82$\pm$5.53 &  33.43$\pm$0.21 \\
& TDGIA   &  84.71$\pm$1.52 & 65.45$\pm$14.30   & \second{87.81$\pm$0.28}  & 87.67$\pm$0.40 & \first{90.45$\pm$0.71}  & 68.70$\pm$15.67   &  42.04$\pm$9.01 &  39.83$\pm$3.15 \\
& MetaGIA   & 87.50$\pm$3.17 & 70.01$\pm$9.32   & 87.37$\pm$0.23  & \second{87.77$\pm$0.22} & \first{90.51$\pm$0.88}  & 82.43$\pm$8.42   &  41.86$\pm$8.33 &  34.03$\pm$0.36 \\
\midrule

\multirow{4}{*}{Pubmed} 
& \emph{clean}  & 88.39$\pm$0.47 &  88.44$\pm$0.34   & \second{89.51$\pm$0.12}   & 88.18$\pm$1.89   & \first{90.30$\pm$0.11}  & 88.09$\pm$0.32 &  87.41$\pm$1.73 &  88.46$\pm$0.20 \\
& PGD  & 59.62$\pm$11.66 & 44.61$\pm$2.78  & \first{82.09$\pm$0.83}  & \second{67.81$\pm$12.23}  &  51.16$\pm$6.04  & 45.85$\pm$1.97  &  48.94$\pm$12.99 &  39.03$\pm$0.10 \\
& TDGIA  & 54.31$\pm$2.38 & 46.26$\pm$1.32  &  \first{82.72$\pm$0.47}  & \second{68.66$\pm$10.64}  & 55.50$\pm$4.03  & 45.57$\pm$2.02  & 47.56$\pm$3.11 &  42.64$\pm$1.41 \\
& MetaGIA  & 61.62$\pm$9.05 & 44.07$\pm$2.11  & \first{79.16$\pm$0.87}   & \second{64.64$\pm$9.70}  & 52.03$\pm$5.53  & 45.81$\pm$2.81  & 44.75$\pm$2.53 &  40.42$\pm$0.17 \\

\bottomrule
\end{tabular}}
\caption{Node classification accuracy (\%) on graph { injection, evasion, non-targeted} attack  in { inductive} learning.  The best and the second-best results for each criterion are highlighted in bold and underlined, respectively. }
 \label{tab:adv_trans_1} 
\end{table*}

\section{Experiments}
To empirically validate the robustness of FROND, we carry out a series of experiments where real-world graphs are subjected to various attack methods. The objective of these experiments is to showcase that FROND models, even in the face of such adversarial perturbations, maintain stable performance in downstream tasks, without the need for any additional preprocessing steps to handle the perturbed data. For a comprehensive and fair evaluation, we perform two distinct evaluations: a poisoning Graph Modification Attack (GMA), where training occurs directly on the perturbed graph; and an evasion attack for Graph Injection Attack (GIA), taking place during the inference phase. 

We emphasize that the primary objective of this paper is to investigate the robustness imparted by FROND and to establish that fractional methods exhibit superior robustness compared to GNNs governed by integer-order dynamical systems. Accordingly, our comparisons focus primarily on standard integer-order graph neural ODE models, along with several specific non-ODE-based methods, including GCN \cite{kipf2017semi}, GAT \cite{vaswani2017attention}, and GraphSAGE \cite{hamilton2017inductive}. It is worth noting that FROND models can be further integrated with other defense techniques, including adversarial training and pre-processing strategies. We delve into this aspect in the supplementary material.

\begin{figure*}[ht]
\centering
\begin{subfigure}{.3\textwidth}
    \centering
    \includegraphics[width=.95\linewidth]{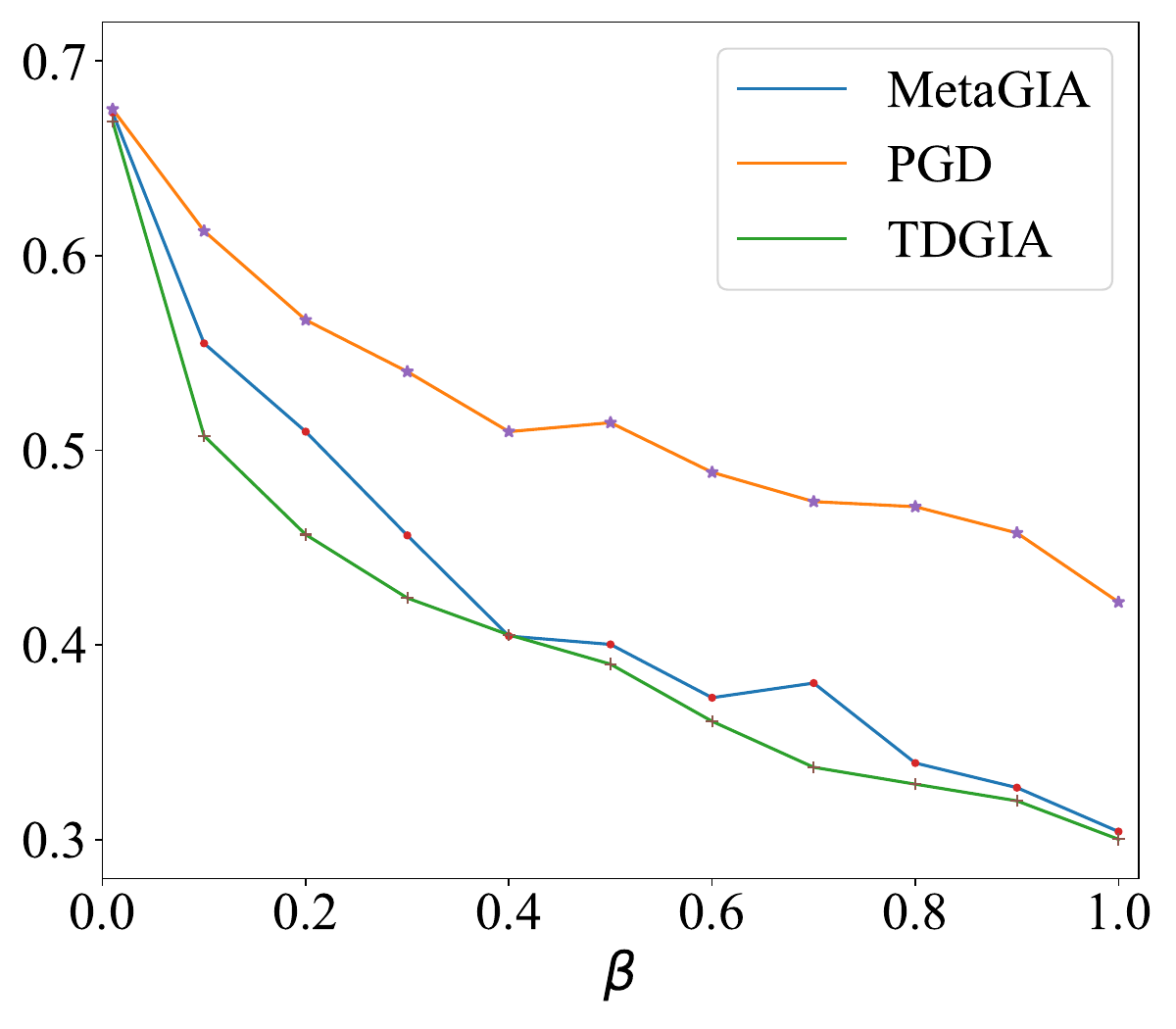} 
    \captionsetup{justification=centering}
    \caption{F-GRAND}
    \label{SUBFIGURE LABEL 1}
\end{subfigure}
\begin{subfigure}{.3\textwidth}
    \centering
    \includegraphics[width=.95\linewidth]{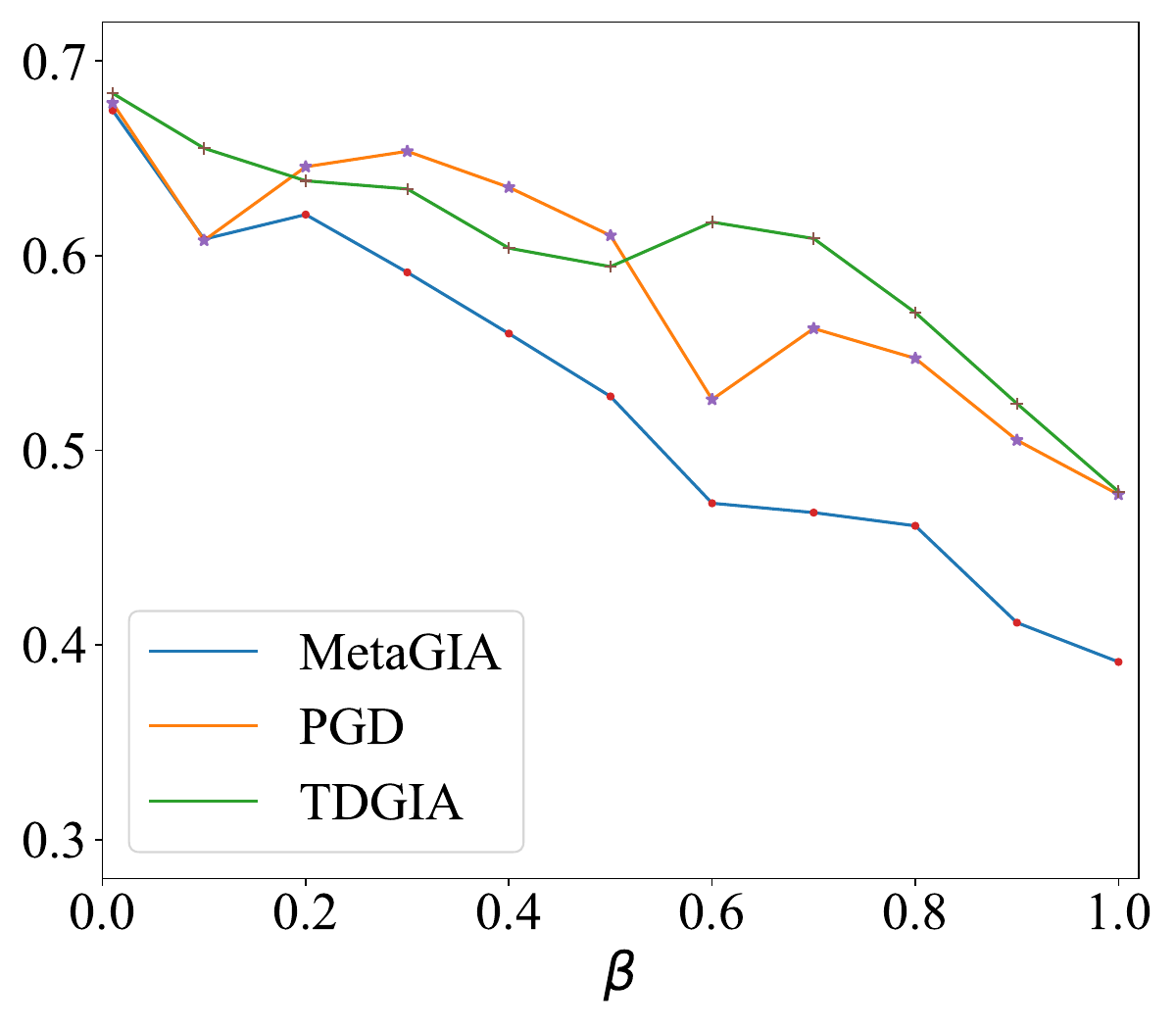}  
    \captionsetup{justification=centering}
    \caption{F-GraphBel}
    \label{SUBFIGURE LABEL 2}
\end{subfigure}
\begin{subfigure}{.3\textwidth}
    \centering
    \includegraphics[width=.95\linewidth]{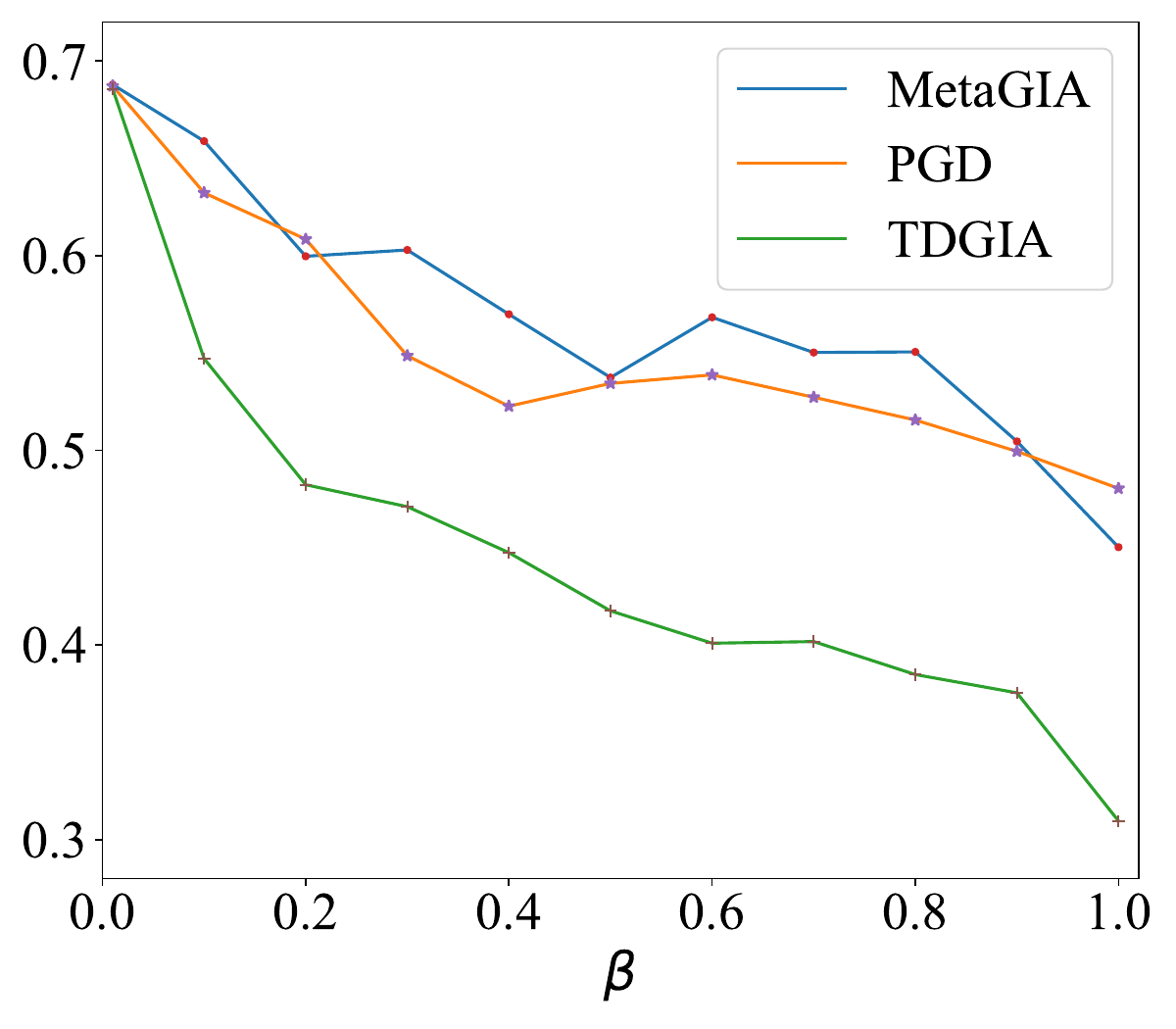} 
    \captionsetup{justification=centering}
    \caption{F-GraphCON}
    \label{SUBFIGURE LABEL 3}
\end{subfigure}
\caption{The impact of $\beta$ on the robust test accuracy.}
\label{fig:beta_model}
\end{figure*}
\subsection{GMA}
Our experimental setup involves the execution of graph modification adversarial attacks employing the Metattack method \cite{zugner_adversarial_2019}. Within the Metattack paradigm, the graph's adjacency matrix is perceived not just as a static structure but as a malleable hyperparameter. This perspective allows for attack optimization through meta-gradients to effectively address the inherent bilevel problem.
For the sake of ensuring a consistent and unbiased comparative landscape, our experiments strictly conform to the attack parameters as outlined in the paper \cite{jin2020prognn}. To achieve a comprehensive evaluation, we vary the perturbation rate, representing the proportion of edge modifications. We source the perturbed graph data from the comprehensive DeepRobust library \cite{li2020deeprobust}. The perturbation rate is adjusted in consistent increments of 5\%, starting from an untouched graph (0\%) and extending up to significant alterations at 25\%.

\subsection{GIA}
As elucidated in the paper \cite{chen2022hao}, GIA presents a considerably potent challenge to GNNs because of its ability to introduce new nodes and establish new edges within the original graph. Executing a GIA entails a two-step process: the injection of nodes and the subsequent update of features. During the node injection phase, new edges are established for the inserted nodes, driven by either gradient data or heuristic methods. 
Drawing inspiration from the methods proposed in \cite{chen2022hao}, we have incorporated three distinct GIA techniques: PGD-GIA, TDGIA\cite{zouKDD2021}, and MetaGIA.
The PGD-GIA method predominantly relies on a randomized approach for node injection. Once these nodes are in place, their features are meticulously curated using the Projected Gradient Descent (PGD) algorithm \cite{MadryICLR2018}. The Topological Deficiency Graph Injection Attack (TDGIA) \cite{zouKDD2021} exploits inherent topological weaknesses in graph structures. This approach harnesses these vulnerabilities to guide edge creation, optimizing a specific loss function to devise suitable features. MetaGIA \cite{chen2022hao} continually refines the adjacency matrix and node features, leaning heavily on gradient information to guide these refinements.
We conduct inductive learning for GIA in line with the data partitioning approach of the GRB framework \cite{zheng2021grb}, allocating 60\% for training, 10\% for validation, and 20\% for testing purposes. To maintain a balanced attack landscape, we pre-process the data using methods from \cite{zheng2021grb}, which involve excluding the 5\% of nodes with the lowest degrees (more susceptible to attacks) and the 5\% with the highest degrees (more resistant to attacks).

\begin{table*}[!htp]
\centering

\small
\makebox[\textwidth][c]{
\begin{tabular}{cccccccc} 
\toprule
Dataset & Attack & F-GRAND & GRAND  & F-GraphBel & GraphBel  & F-GraphCON & GraphCON   \\
\midrule

\multirow{3}{*}{Computers} 
& \emph{clean} &  90.0$\pm$0.05 &   \first{92.78$\pm$0.13}  & 88.36$\pm$1.05 & 90.14$\pm$0.27  & 89.99$\pm$0.28 &  \second{91.70$\pm$0.25} \\
& PGD & \second{75.29$\pm$1.17} &   16.44$\pm$0.11  &  \first{86.35$\pm$0.10} & 67.04$\pm$1.28   & 71.64$\pm$2.33 & 13.11$\pm$4.73 \\
& TDGIA & \second{71.99$\pm$0.73} &   15.10$\pm$0.76&  \first{86.21$\pm$0.21} & 53.75$\pm$2.84  & 66.35$\pm$1.94 &  4.33$\pm$4.21 \\

\bottomrule
\end{tabular}}
\caption{Node classification accuracy (\%) on graph {injection, evasion, non-targeted, white-box} attack  in {inductive} learning.  }
 \label{tab:adv_ind_white} 
\end{table*}

\subsection{Results}
\cref{tab:metattack} presents the results of GMA for transductive learning. As can be observed from the table, our proposed fractional-order methods outperform the original GRAND, GraphBel, and GraphCON in terms of robustness accuracy. These results validate and resonate with our theoretical findings, as discussed in \cref{theo:ml_mono}. Notably, these empirical observations underscore the capability of the FROND paradigm in enhancing a system's resilience, particularly when faced with input perturbations.
The GIA results are presented in \cref{tab:adv_trans_1}. We note that the fractional-order approach significantly improves post-attack accuracy relative to its integer-order graph neural ODE counterparts. Among these neural ODE models, \cite{SonKanWan:C22} demonstrated that GraphBel possesses superior robustness, which is further amplified by our fractional-order differential technique. 

\subsection{White-Box Attack}
White-box attacks, which directly target the model, are stronger than the black-box attacks used in \cref{tab:adv_trans_1}. To demonstrate that our graph neural FDE model can consistently improve the robustness of graph neural ODE models, we also conducted white-box GIA. The results are presented in \cref{tab:adv_ind_white}. Although the accuracy under white-box GIA is lower than under black-box GIA, our graph neural FDE models still outperform the graph neural ODE models. This observation aligns with our theoretical findings presented in \cref{sec.metho} of our main paper. Our graph neural FDE models indeed enhance the robustness of neural ODE models under both black-box and white-box scenarios.

\subsection{Ablation Study}
\subsubsection{Influence of $\beta$}
We assess the robustness accuracy of our fractional-order method across varying $\beta$ values. The findings are depicted in \cref{fig:beta_model}. A discernible trend emerges: as $\beta$ increases, the accuracy under the three GIA methods diminishes. This observation aligns with our theoretical insights presented in \cref{theo:ml_mono}.

\subsubsection{Model Complexity}
A comparison of inference times between our models and the baseline models is presented in \cref{tab:model_inference}. The results indicate that fractional-based models have similar inference times to graph neural ODE models. Notably, fractional-based models maintain the same training parameters as integer ODE models, avoiding any extra parameters. These findings highlight the efficiency and flexibility of our approach.

\begin{table}[!htp]
    \fontsize{9pt}{11pt}\selectfont
    \centering
    \makebox[0.4\textwidth][c]{
    \begin{tabular}{c|c}
    \toprule
    Model & Inf. Time(s) \\
    \midrule
    F-GRAND & 18.74 \\
    GRAND & 16.40 \\
    F-GraphBel & 64.90 \\
    GraphBel & 78.51 \\
    F-GraphCON & 21.33 \\
    GraphCON & 18.27 \\
    \bottomrule
    \end{tabular}}
    \caption{Inference time of models on the Cora dataset: integral time $T=10$ and step size of 1.}
    
    \label{tab:model_inference}
\end{table}

\section{Conclusion}
In this paper, we have undertaken a comprehensive exploration of robustness against adversarial attacks within the framework of the graph neural FDE models, i.e., FROND models, leading to substantial insights. Our investigation has yielded significant revelations, notably demonstrating the heightened robustness of the FROND models when compared to existing graph neural ODE models. Moreover, our work has contributed theoretical clarity, shedding light on the underlying reasons behind the heightened robustness of the FROND models in contrast to the graph neural ODE counterparts.


\section*{Acknowledgments}
This research is supported by the Singapore Ministry of Education Academic Research Fund Tier 2 grant MOE-T2EP20220-0002, and the National Research Foundation, Singapore and Infocomm Media Development Authority under its Future Communications Research and Development Programme. The computational work for this article was partially performed on resources of the National Supercomputing Centre, Singapore (https://www.nscc.sg). To improve the readability, parts of this paper have been grammatically revised using ChatGPT \cite{openai2022chatgpt4}.

\appendix 

This supplementary material complements the main body of our paper, providing additional details and supporting evidence for the assertions made therein. The structure of this document is as follows:

\begin{enumerate}
    \item A comprehensive background on fractional calculus is detailed in \cref{sec.review}.
    \item An expanded discussion on related work is provided in \cref{sec.more_rel}.
    \item Details of the fractional differential equation (FDE) solver used in our paper can be found in \cref{sec.supp_solver}.
        \item  Theoretical results from the main paper are rigorously proven in \cref{sec.proof}.
        \item Additional experiments, ablation studies, and dataset specifics are elaborated in \cref{sec.supp_exp}.
\end{enumerate}

\section{Review of Caputo Fractional Calculus}\label{sec.review}
We appreciate the need for a more accessible explanation of the Caputo time-fractional derivative and its derivation, as the mathematical intricacies may be challenging for some readers in the GNN community. To address this, we are providing a more comprehensive background in this section.

\subsection{Caputo Fractional Derivative and Its Compatibility of Integer-order Derivative}
The Caputo fractional derivative of a function $f(t)$ over an interval $[0,b]$, of a general positive order $\beta \in (0,\infty)$, is defined as follows:
\begin{align}
\label{Cap_Frac}
   D_t^\beta f(t)=\frac{1}{\Gamma(\lceil \beta \rceil-\beta)} \int_0^t(t-\tau)^{\lceil \beta \rceil-\beta-1} f^{(\lceil \beta \rceil)}(\tau) \mathrm{d} \tau,
\end{align}
Here, $\lceil \beta \rceil$ is the smallest integer greater than or equal to $\beta$, $\Gamma(\cdot)$ symbolizes the gamma function, and $f^{(\lceil \beta \rceil)}(\tau)$ signifies the $\lceil \beta \rceil$-order derivative of $f$. Within this definition, it is presumed that $f^{(\lceil \beta \rceil)} \in L^1[0,b]$, i.e., $f^{(\lceil \beta \rceil)}$ is Lebesgue integrable, to ensure the well-defined nature of $D_t^\beta f(t)$ as per \eqref{Cap_Frac} \cite{diethelm2010analysis}.
When addressing a vector-valued function, the Caputo fractional derivative is defined on a component-by-component basis for each dimension, similar to the integer-order derivative. For ease of exposition, we explicitly handle the scalar case here, although all following results can be generalized to vector-valued functions.
The Laplace transform for a general order $\beta \in (0,\infty)$ is presented in Theorem 7.1 \cite{diethelm2010analysis} as:
\begin{align}
\label{L_Cap_Frac}
\mathcal{L} D_t^\beta f(s)=s^\beta \mathcal{L} f(s)-\sum_{k=1}^{\lceil \beta \rceil} s^{\beta -k} f^{(k-1)}(0).
\end{align}
where we assume that $\mathcal{L}f$ exists on $[s_0,\infty)$ for some $s_0\in\mathbb{R}$. In contrast, for the integer-order derivative $f^{(\beta)}$ when $\beta$ is a positive integer, we also have the formulation \eqref{L_Cap_Frac}, with the only difference being the range of $\beta$. Therefore, as $\beta$ approaches some integer, the Laplace transform of the Caputo fractional derivative converges to the Laplace transform of the traditional integer-order derivative.  \emph{As a result, we can conclude that the Caputo fractional derivative operator generalizes the traditional integer-order derivative since their Laplace transforms coincide when $\beta$ takes an integer value.} Furthermore, the inverse Laplace transform indicates the uniquely determined $D_t^\beta=f^{(\beta)}$ (in the sense of almost everywhere \cite{Cohen2007}).

Under specific reasonable conditions, we can directly present this generalization as follows. We suppose $f^{(\lceil \beta \rceil)}(t)$ \eqref{Cap_Frac} is continuously differentiable. In this context, integration by parts can be utilized to demonstrate that
\begin{equation}
\begin{split}
D_t^\beta f(t) & =\frac{1}{\Gamma(\lceil\beta\rceil-\beta)}\Bigg(-\left[f^{(\lceil\beta\rceil)}(\tau) \frac{(t-\tau)^{\lceil\beta\rceil-\beta}}{\lceil\beta\rceil-\beta}\right]\bigg|_0^t\\
&\quad\quad+\int_0^t f^{(\lceil\beta\rceil+1)}(\tau) \frac{(t-\tau)^{\lceil\beta\rceil-\beta}}{\lceil\beta\rceil-\beta} \mathrm{d} \tau\Bigg) \\
& =\frac{t^{\lceil\beta\rceil-\beta} f^{(\lceil\beta\rceil)}(0)}{\Gamma(\lceil \beta \rceil-\beta+1)}+\frac{1}{\Gamma(\lceil \beta \rceil-\beta+1)} \\
&\quad\quad\times\int_{0}^{t}(t-\tau)^{\lceil \beta \rceil-\beta} f^{(\lceil\beta\rceil+1)}(\tau) \mathrm{d} \tau
\end{split}
\end{equation}
When $\beta\rightarrow \lceil \beta \rceil$, we get the following 
\begin{equation}
\begin{split}
\lim_{\beta\rightarrow \lceil \beta \rceil} D_t^\beta f(t) 
&= f^{(\lceil \beta \rceil)}(0)+\int_0^t f^{(\lceil\beta\rceil+1)}(\tau) \mathrm{d} \tau\\
&= f^{(\lceil \beta \rceil)} (0)+ f^{(\lceil \beta \rceil)}(t) -  f^{(\lceil \beta \rceil)}(0) \\
&= f^{(\lceil \beta \rceil)}(t)
\end{split}
\end{equation}
In parallel to the integer-order derivative, given certain conditions (\cite{diethelm2010analysis}[Lemma 3.13]), the Caputo fractional derivative possesses the semigroup property as illustrated in \cite{diethelm2010analysis}[Lemma 3.13]:
\begin{align}
D_t^{\varepsilon} D_t^n f= D_t^{n+\varepsilon} f. 
\end{align}
Nonetheless, it is crucial to recognize that, in general, the Caputo fractional derivative does not exhibit the semigroup property, a characteristic inherent to integer-order derivatives, as detailed in \cite{diethelm2010analysis}[Section 3.1]. The Caputo fractional derivative also exhibits linearity, but does not adhere to the same Leibniz and chain rules as its integer counterpart. As such properties are not utilized in our work, we refer interested readers to \cite{diethelm2010analysis}[Theorem 3.17 and Remark 3.5.]. We believe the above explanation facilitates understanding the relation between the Caputo derivative and its generalization of the integer-order derivative.

\section{More Discussion of Related Work: Fractional Calculus and Deep Learning} \label{sec.more_rel}
In this section, we further discussion the applications of fractional calculus, with a particular emphasis on its implications in deep learning.

Recently, fractional calculus has garnered significant attention due to its myriad applications spanning diverse areas. Key domains where fractional calculus has demonstrated potential include numerical analysis \cite{yuste2005explicit}, viscoelastic materials \cite{coleman1961foundations}, population dynamics \cite{almeida2016modeling}, control theory \cite{podlubny1994fractional}, signal processing \cite{machado2011recent}, financial mathematics \cite{scalas2000fractional}, and especially in characterizing porous and fractal systems \cite{nigmatullin1986realization,mandelbrot1982fractal,ionescu2017role}. Within these arenas, fractional-order differential equations have emerged as an enhanced alternative to their integer-ordered counterparts, serving as a robust mathematical tool for various system analyses \cite{diethelm2002analysis}. For instance, fractional calculus has been pivotal in diffusion process studies, elucidating phenomena from protein diffusion in cellular structures \cite{krapf2015mechanisms} to complex biological processes \cite{ionescu2017role}.

In the landscape of deep learning, \cite{liu2022regularized} introduced an innovative approach for GNN parameter optimization via fractional derivatives. This deviates significantly from the traditional use of integer-order derivatives in optimization algorithms such as SGD or Adam \cite{kingma2014adam}. However, the crux of their approach is fundamentally different from ours. While they focus on leveraging fractional derivatives for gradient optimization, our emphasis is on the fractional-derivative evolution of node embeddings. In another vein, \cite{antil2020fractional} draws from fractional calculus, specifically the L1 approximation of the Captou fractional derivative, to design a densely connected neural network. This design seeks to effectively manage non-smooth data and counter the vanishing gradient problem. Though our work orbits the same realm, our novelty lies in infusing fractional calculus into Graph ODE models, concentrating on the utility of fractional derivatives for evolving node embeddings, and highlighting its affinity with non-Markovian dynamic processes.

From the vantage of physics-informed machine learning, there exists a research trajectory dedicated to the formulation of neural networks anchored in physical principles, specifically tailored for solving fractional PDEs. A trailblazing contribution in this sphere is the Fractional Physics Informed Neural Networks (fPINNs) \cite{pang2019fpinns}. Subsequent explorations, including \cite{guo2022monte,wang2022fractional}, have expanded in this trajectory. It's pivotal to underline that these endeavors are distinctly different from our proposed methodology.

\section{Numerical Solvers for FDEs}\label{sec.supp_solver}
In this section, we present further details about how to solve FDEs using  the fractional Adams–Bashforth–Moulton method solvers from \cite{diethelm2004solvers}.
The predictor \(y^P_{k+1}\) is expressed as:
\begin{align}
  y^P_{k+1} = \sum_{j=0}^{\lceil\beta\rceil - 1} \frac{t^{j}_{k+1}}{j!} y^{(j)}_0 + \frac{1}{\Gamma(\beta)} \sum_{j=0}^{k} b_{j,k+1} f(t_j, y_j).  
\end{align}
Here, $k$ represents the current iteration or time step index in the discretization process. $h$ is the step size or time interval between successive approximations with $t_j=hj$ and $\lceil \cdot \rceil$ represents the ceiling function, when $0<\beta\le 1$, we have $\lceil \beta \rceil=1$. The coefficients $b_{j,k+1}$ are defined as follows:
\begin{equation}
    b_{j,k+1} = \frac{h^\beta}{\beta} \left( (k+1-j)^\beta - (k-j)^\beta \right),
\end{equation}

Leveraging this prediction, a corrector term can be formulated to enhance the solver's numerical accuracy. This can be viewed as the fractional counterpart of the traditional one-step Adams--Moulton method. However, we do not employ this additional corrector term in our paper. We reserve the examination of the corrector solution and its impact on GRAFDE for future work.

\section{Proof of \cref{theo:ml_mono}}\label{sec.proof}
We begin by referencing Eq.(127) from \cite{podlubny1994fractional}:
\begin{align}
  \begin{aligned} & E_{\beta, \alpha}(z)= 
                \frac{1}{\beta} z^{(1-\alpha) / \beta} \exp \left(z^{1 / \beta}\right)+ \\ 
  &\quad \quad \quad \frac{1}{2 \beta \pi i} \int_{\gamma(1, \varphi)} \frac{\exp \left(\zeta^{1 / \beta}\right) \zeta^{(1-\alpha) / \beta}}{\zeta-z} \ud \zeta, \\ 
  &\quad \quad \quad  \quad  \quad  \quad  \quad  \quad  \quad  \quad (|\arg (z)| \leq \varphi,|z|>1).\end{aligned}   \label{eq.gfgad}
\end{align}
Here, $ E_{\beta, \alpha}(z)$ denotes the generalized two-parameter Mittag-Leffler function, defined as $E_{\beta, \alpha}(z)\coloneqq\sum_{k=0}^{\infty} \frac{z^k}{\Gamma(\beta k+\alpha)}$. 
Its connection to the one-parameter Mittag-Leffler function, as given in \cref{def.ml}, is captured by $E_{\beta}(z) = E_{\beta, 1}(z)$. Note that here $z\in\Complex$ lies in the complex plane. 
The integral contour $\gamma(1, \varphi)$ is elaborately described in \cite[Figure 1.4, Sec 1.1.6]{podlubny1994fractional} and is displayed in \cref{fig.contour} for our reference. The parameter $\varphi$ is selected such that 
\begin{align}
    \frac{\pi \beta}{2}<\varphi<\min \{\pi, \pi \beta\}. \label{eq.dsfafzc}
\end{align}
For a thorough understanding of this integral contour, we direct readers to the aforementioned reference.

\begin{figure}[t]
    \centering
    \adjustbox{scale=0.8,center}{

\tikzset{every picture/.style={line width=0.75pt}} 

\begin{tikzpicture}[x=0.75pt,y=0.75pt,yscale=-1,xscale=1]

\draw  (155,169.6) -- (380.6,169.6)(265.06,39.8) -- (265.06,283.6) (373.6,164.6) -- (380.6,169.6) -- (373.6,174.6) (260.06,46.8) -- (265.06,39.8) -- (270.06,46.8)  ;
\draw  [draw opacity=0] (245.83,146.57) .. controls (250.98,142.27) and (257.58,139.66) .. (264.81,139.6) .. controls (281.38,139.47) and (294.92,152.79) .. (295.06,169.35) .. controls (295.19,185.92) and (281.87,199.46) .. (265.3,199.6) .. controls (257.87,199.66) and (251.05,197.01) .. (245.77,192.58) -- (265.06,169.6) -- cycle ; \draw   (245.83,146.57) .. controls (250.98,142.27) and (257.58,139.66) .. (264.81,139.6) .. controls (281.38,139.47) and (294.92,152.79) .. (295.06,169.35) .. controls (295.19,185.92) and (281.87,199.46) .. (265.3,199.6) .. controls (257.87,199.66) and (251.05,197.01) .. (245.77,192.58) ;  
\draw    (160.6,50.2) -- (245.83,146.57) ;
\draw [shift={(200.5,95.31)}, rotate = 48.51] [fill={rgb, 255:red, 0; green, 0; blue, 0 }  ][line width=0.08]  [draw opacity=0] (7.14,-3.43) -- (0,0) -- (7.14,3.43) -- (4.74,0) -- cycle    ;
\draw    (160.2,290.6) -- (245.77,192.58) ;
\draw [shift={(204.69,239.63)}, rotate = 131.12] [fill={rgb, 255:red, 0; green, 0; blue, 0 }  ][line width=0.08]  [draw opacity=0] (7.14,-3.43) -- (0,0) -- (7.14,3.43) -- (4.74,0) -- cycle    ;
\draw  [draw opacity=0][dash pattern={on 4.5pt off 4.5pt}] (239.34,138.34) .. controls (246.26,133.05) and (255.02,129.86) .. (264.57,129.78) .. controls (287.21,129.59) and (305.7,146.98) .. (305.88,168.62) .. controls (305.88,169.11) and (305.88,169.61) .. (305.86,170.09) -- (264.89,168.96) -- cycle ; \draw [color={rgb, 255:red, 155; green, 155; blue, 155 }  ,draw opacity=1 ][dash pattern={on 4.5pt off 4.5pt}] [dash pattern={on 4.5pt off 4.5pt}]  (241.8,136.6) .. controls (248.29,132.36) and (256.12,129.85) .. (264.57,129.78) .. controls (287.21,129.59) and (305.7,146.98) .. (305.88,168.62) .. controls (305.88,169.11) and (305.88,169.61) .. (305.86,170.09) ;  \draw [shift={(239.34,138.34)}, rotate = 331.02] [fill={rgb, 255:red, 155; green, 155; blue, 155 }  ,fill opacity=1 ][dash pattern={on 3.49pt off 4.5pt}][line width=0.08]  [draw opacity=0] (3.57,-1.72) -- (0,0) -- (3.57,1.72) -- cycle    ;
\draw  [draw opacity=0][dash pattern={on 4.5pt off 4.5pt}] (301,169.76) .. controls (301.06,188.94) and (285.29,204.59) .. (265.69,204.75) .. controls (257,204.82) and (249.01,201.84) .. (242.79,196.82) -- (265.41,169.88) -- cycle ; \draw [color={rgb, 255:red, 128; green, 128; blue, 128 }  ,draw opacity=1 ][dash pattern={on 4.5pt off 4.5pt}] [dash pattern={on 4.5pt off 4.5pt}]  (301,169.76) .. controls (301.06,188.94) and (285.29,204.59) .. (265.69,204.75) .. controls (258.04,204.82) and (250.94,202.51) .. (245.11,198.54) ; \draw [shift={(242.79,196.82)}, rotate = 37.35] [fill={rgb, 255:red, 128; green, 128; blue, 128 }  ,fill opacity=1 ][dash pattern={on 3.49pt off 4.5pt}][line width=0.08]  [draw opacity=0] (3.57,-1.72) -- (0,0) -- (3.57,1.72) -- cycle    ; 
\draw [color={rgb, 255:red, 128; green, 128; blue, 128 }  ,draw opacity=1 ] [dash pattern={on 4.5pt off 4.5pt}]  (264.89,168.96) -- (280.62,147.42) ;
\draw [shift={(281.8,145.8)}, rotate = 126.13] [color={rgb, 255:red, 128; green, 128; blue, 128 }  ,draw opacity=1 ][line width=0.75]    (4.37,-1.32) .. controls (2.78,-0.56) and (1.32,-0.12) .. (0,0) .. controls (1.32,0.12) and (2.78,0.56) .. (4.37,1.32)   ;

\draw (160.8,248.4) node [anchor=north west][inner sep=0.75pt]  [font=\tiny]  {$\gamma ( 1,\varphi )$};
\draw (284.4,122.8) node [anchor=north west][inner sep=0.75pt]  [font=\tiny,color={rgb, 255:red, 128; green, 128; blue, 128 }  ,opacity=1 ]  {$\varphi $};
\draw (280.4,202.4) node [anchor=north west][inner sep=0.75pt]  [font=\tiny,color={rgb, 255:red, 128; green, 128; blue, 128 }  ,opacity=1 ]  {$-\varphi $};
\draw (274.4,157.4) node [anchor=north west][inner sep=0.75pt]  [font=\tiny,color={rgb, 255:red, 128; green, 128; blue, 128 }  ,opacity=1 ]  {$1$};

\end{tikzpicture}

}
    \caption{Contour $\gamma(1, \varphi)$}
    \label{fig.contour}
\end{figure}
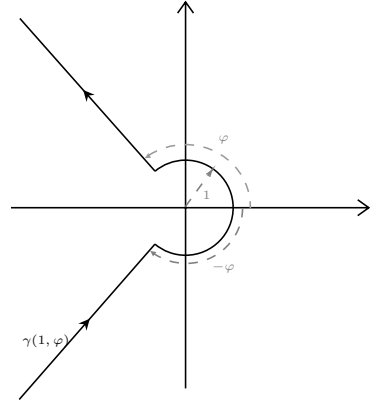

Upon substituting $z=L T^{\beta}$ into \cref{eq.gfgad} with $\alpha=1$, we obtain 
\begin{align}
\begin{aligned}
    E_{\beta}&(L T^{\beta}) = \frac{1}{\beta} \exp \left(L^{1 / \beta} T\right)+\frac{1}{2 \beta \pi i} \int_{\gamma(1, \varphi)} \frac{\exp \left(\zeta^{1 / \beta}\right) }{\zeta-LT^{\beta}} \ud \zeta, \label{eq.gadd}
\end{aligned}
\end{align}
assuming $T$ is sufficiently large such that $L T^\beta>1$, thereby satisfying the condition stated in \cref{eq.gfgad}.

\noindent\tb{First Term Analysis:}
For the first term $\frac{1}{\beta} \exp \left( L^{1 / \beta} T\right)$, the derivative with respect to $
\beta$ is 
\begin{align*}
w(\beta)&\coloneqq-\frac{\exp(L^{1 / \beta} T)}{\beta^2}-\frac{\exp(L^{1 / \beta} T) L^{1 / \beta} T \ln (L)}{\beta^3} \\
&= -\frac{\exp(L^{1 / \beta} T)}{\beta^3}\left(\beta+L^{1 / \beta} T \ln (L)\right)
\end{align*}

We have that when $T$ is sufficiently large, $\beta+L^{1 / \beta} T \ln (L)$  becomes negative due to $L < 1$. This observation directly implies that $w(\beta)$ is positive under these conditions. Moreover, for $\beta$ within the interval $[\varepsilon, 1]$, we observe that:
\begin{align}
    w(\beta)>-{\exp(L^{1 / \beta} T)}\left(1+L^{1 / \beta} T \ln (L)\right), \label{eq.tdsafadc}
\end{align}
which is unbounded when $T\rightarrow\infty$.\\

\noindent\tb{Second Term Analysis:}
We denote the integral in \cref{eq.gadd} as
\begin{align*}
I(\beta)\coloneqq\frac{1}{2 \beta \pi i} \int_{\gamma(1, \varphi)} \frac{\exp \left(\zeta^{1 / \beta}\right) }{\zeta-LT^{\beta}} \ud \zeta.
\end{align*}
We now take the derivative of $I(\beta)$ with respect to $\beta$ and obtain 
\begin{align}
    \begin{aligned}
     I'(\beta) &= \frac{1}{2 \beta \pi i} \int_{\gamma(1, \varphi)} \frac{\zeta^{\frac{1}{\beta}}\ln (\zeta) \exp \left(\zeta^{\frac{1}{\beta}}\right)}{\beta^2\left(L T^\beta-\zeta\right)}+\\
    &\quad \quad \quad \frac{L T^\beta \ln (T)\exp \left(\zeta^{\frac{1}{\beta}}\right)}{\left(L T^\beta-\zeta\right)^2} \ud \zeta. \label{eq.rfadfa}
    \end{aligned}
\end{align}
For the first item in \cref{eq.rfadfa}, we denote it as $ H_1(T)$:
\begin{align}
\begin{aligned}
   H_1(T)&\coloneqq \frac{1}{2 \beta \pi i} \int_{\gamma(1, \varphi)} \frac{\zeta^{\frac{1}{\beta}}\ln (\zeta) \exp \left(\zeta^{\frac{1}{\beta}}\right)}{\beta^2\left(L T^\beta-\zeta\right)}\ud \zeta 
\end{aligned}  
\end{align}
\begin{figure}[H]
    \centering
    \adjustbox{scale=0.8,center}{

\tikzset{every picture/.style={line width=0.75pt}} 

\begin{tikzpicture}[x=0.75pt,y=0.75pt,yscale=-1,xscale=1]

\draw  (155,169.6) -- (380.6,169.6)(265.06,39.8) -- (265.06,283.6) (373.6,164.6) -- (380.6,169.6) -- (373.6,174.6) (260.06,46.8) -- (265.06,39.8) -- (270.06,46.8)  ;
\draw  [draw opacity=0] (277.03,142.09) .. controls (287.56,146.67) and (294.95,157.13) .. (295.06,169.35) .. controls (295.16,181.82) and (287.64,192.57) .. (276.85,197.19) -- (265.06,169.6) -- cycle ; \draw   (277.03,142.09) .. controls (287.56,146.67) and (294.95,157.13) .. (295.06,169.35) .. controls (295.16,181.82) and (287.64,192.57) .. (276.85,197.19) ;  
\draw    (320.2,61) -- (277.03,142.09) ;
\draw [shift={(300.54,97.92)}, rotate = 118.03] [fill={rgb, 255:red, 0; green, 0; blue, 0 }  ][line width=0.08]  [draw opacity=0] (7.14,-3.43) -- (0,0) -- (7.14,3.43) -- (4.74,0) -- cycle    ;
\draw    (319.8,279.8) -- (276.85,197.19) ;
\draw [shift={(297.13,236.19)}, rotate = 62.53] [fill={rgb, 255:red, 0; green, 0; blue, 0 }  ][line width=0.08]  [draw opacity=0] (7.14,-3.43) -- (0,0) -- (7.14,3.43) -- (4.74,0) -- cycle    ;
\draw  [draw opacity=0][dash pattern={on 4.5pt off 4.5pt}] (280.96,132.89) .. controls (295.5,138.81) and (305.75,152.55) .. (305.88,168.62) .. controls (305.88,169.11) and (305.88,169.61) .. (305.86,170.09) -- (264.89,168.96) -- cycle ; \draw [color={rgb, 255:red, 155; green, 155; blue, 155 }  ,draw opacity=1 ][dash pattern={on 4.5pt off 4.5pt}] [dash pattern={on 4.5pt off 4.5pt}]  (283.74,134.15) .. controls (296.79,140.61) and (305.76,153.59) .. (305.88,168.62) .. controls (305.88,169.11) and (305.88,169.61) .. (305.86,170.09) ;  \draw [shift={(280.96,132.89)}, rotate = 30.53] [fill={rgb, 255:red, 155; green, 155; blue, 155 }  ,fill opacity=1 ][dash pattern={on 3.49pt off 4.5pt}][line width=0.08]  [draw opacity=0] (3.57,-1.72) -- (0,0) -- (3.57,1.72) -- cycle    ;
\draw    (392.6,151.8) -- (294.63,165.29) ;
\draw [shift={(347.68,157.98)}, rotate = 172.16] [fill={rgb, 255:red, 0; green, 0; blue, 0 }  ][line width=0.08]  [draw opacity=0] (7.14,-3.43) -- (0,0) -- (7.14,3.43) -- (4.74,0) -- cycle    ;
\draw  [dash pattern={on 0.84pt off 2.51pt}]  (264.89,168.96) -- (297.8,165) ;
\draw    (357,157) -- (358.6,169.8) ;
\draw   (354.21,157.22) -- (356.78,156.81) -- (357.19,159.38) -- (354.62,159.79) -- cycle ;
\draw  [draw opacity=0][dash pattern={on 4.5pt off 4.5pt}] (319.1,161.69) .. controls (319.69,164.11) and (320.05,166.63) .. (320.17,169.2) -- (280.95,170.97) -- cycle ; \draw [color={rgb, 255:red, 155; green, 155; blue, 155 }  ,draw opacity=1 ][dash pattern={on 4.5pt off 4.5pt}] [dash pattern={on 4.5pt off 4.5pt}]  (319.69,164.64) .. controls (319.94,166.13) and (320.1,167.66) .. (320.17,169.2) ;  \draw [shift={(319.1,161.69)}, rotate = 85.16] [fill={rgb, 255:red, 155; green, 155; blue, 155 }  ,fill opacity=1 ][dash pattern={on 3.49pt off 4.5pt}][line width=0.08]  [draw opacity=0] (3.57,-1.72) -- (0,0) -- (3.57,1.72) -- cycle    ;

\draw (284.4,264.8) node [anchor=north west][inner sep=0.75pt]  [font=\tiny]  {$\gamma ( 1,\varphi )$};
\draw (308.8,122.8) node [anchor=north west][inner sep=0.75pt]  [font=\tiny,color={rgb, 255:red, 128; green, 128; blue, 128 }  ,opacity=1 ]  {$\varphi $};
\draw (323.6,161.4) node [anchor=north west][inner sep=0.75pt]  [font=\tiny]  {$\delta $};

\end{tikzpicture}

}
    \caption{}
    \label{fig.ffadad}
\end{figure}
We then deduce that for large $T$, the minimum distance from the point $(LT^\beta, 0)$ to the contour $\gamma(1, \varphi)$  satisfies
\begin{align*}
\min_{\zeta \in \gamma(1, \varphi)}|\zeta - LT^\beta| \ge p LT^\beta,
\end{align*}
where $p > 0$ is a leading constant. 
This is due to the fact that, with $\beta$ ranging within $[\varepsilon, 1]$ where $\varepsilon > 0$, as depicted in \cref{fig.ffadad}, a sufficiently small $\delta > 0$ can be identified such that $\min_{\zeta \in \gamma(1, \varphi)}|\zeta - LT^\beta| \ge \sin(\delta) LT^\beta$. Consequently, we can set $p = \sin(\delta)$.  We then have that
\begin{align}
\begin{aligned}
   &\abs{H_1(T)}\\
  \le  &\frac{1}{2 p \beta \pi LT^\beta} \int_{\gamma(1, \varphi)} \abs{\zeta^{\frac{1}{\beta}}\ln (\zeta) \exp \left(\zeta^{\frac{1}{\beta}}\right)\beta^{-2}}\ud \zeta \\
     =  & \frac{1}{2 p\beta^3 \pi LT^\beta} \int_{\gamma(1, \varphi)} \abs{\zeta^{\frac{1}{\beta}}\ln (\zeta)} \abs{\exp \left(\zeta^{\frac{1}{\beta}}\right)}\ud \zeta \\
\end{aligned}  
\end{align}
The integral on the right-hand side converges, because for $\zeta$ such that $\arg (\zeta)= \pm \varphi$ and $|\zeta| \geq 1$ the following holds:
\begin{align*}
\left|\exp \left(\zeta^{1 / \beta}\right)\right|=\exp \left(|\zeta|^{1 / \beta} \cos \left(\frac{\varphi}{\beta}\right)\right)
\end{align*}
where $\cos (\varphi / \beta)<0$ due to condition in \cref{eq.dsfafzc}. Denote its value as $C_1(\beta)$ which is continuous w.r.t $\beta$.
Thus, we have
\begin{align}
\begin{aligned}
   \abs{H_1(T)}\le \frac{1}{2 p\beta^3 \pi LT^\beta} C_1(\beta)\\
\end{aligned}  
\end{align}
Since $\beta$ takes value in the compact set $[\varepsilon,1]$, the continuous function $\frac{C_1(\beta)}{\beta^3}$ is bounded. Therefore, we conclude that $|H_1(T)| < 1$ when $T$ is sufficiently large for all $\beta\in [\varepsilon,1]$.

Similarly, for the second term in \cref{eq.rfadfa}, denoted as $H_2(T)$, we have
\begin{align}
\begin{aligned}
   \abs{H_2(T)} &\le  \frac{L T^\beta \ln (T)}{2 p^2\beta \pi L^2T^{2\beta}} \int_{\gamma(1, \varphi)} \abs{\exp \left(\zeta^{\frac{1}{\beta}}\right)}\ud \zeta \\
   & \le  \frac{\ln (T)}{2 p^2\beta \pi L T^{\beta}} \int_{\gamma(1, \varphi)} \abs{\exp \left(\zeta^{\frac{1}{\beta}}\right)}\ud \zeta \\
\end{aligned}  
\end{align}
Following the same reasoning, we deduce that $|H_2(T)| < 1$ when $T$ is sufficiently large for all $\beta\in [\varepsilon,1]$. Recalling from \cref{eq.tdsafadc} that $w(\beta)$ is unbounded, the proof is completed by combining the terms to conclude the derivative of $E_{\beta}(L T^{\beta})$ with respect to $\beta$ is positive over $[\varepsilon, 1]$ for sufficiently large $T$.

\section{Experimets} \label{sec.supp_exp}

\subsection{Datasets}

The statistics of the datasets used in our experiments are presented in \cref{tab:data}. The attack budgets for the GIA, as shown in \cref{tab:adv_trans_1}, are detailed in \cref{tab:attack_max_min}. We adhered to the attack budgets specified in the paper \cite{chen2022hao} for GIA.

\begin{table}[!htp]
    \small
    \centering
    \begin{tabular}{ccccccc} 
    \toprule
    Dataset &  \# Nodes & \# Edges & \# Features & \# Classes  \\
    \midrule
     Cora 	   & 2708  & 5429  & 1433  & 7   \\
    \midrule
     Citeseer 	   & 3327  & 4732	  & 3703  & 6   \\
    \midrule
     PubMed 		   & 19717  & 44338	  &  500  & 3  \\
    \midrule
   
    Computers 	   & 13,752  & 245,861	  & 767  & 10    \\

    \bottomrule
    \end{tabular}
    \caption{Dataset Statistics}
    \label{tab:data}
\end{table}

\subsection{White-box attack}

White-box attacks, which directly target the model, are stronger than the black-box attacks used in \cref{tab:adv_trans_1}. To demonstrate that FROND can consistently improve the robustness of graph neural ODE models, we also conducted white-box GIA. The results are presented in \cref{tab:adv_ind_white}. Although the accuracy under white-box GIA is lower than under black-box GIA, our FROND models still outperform the graph neural ODE models. This observation aligns with our theoretical findings presented in \cref{sec.metho} of our main paper. Our FROND models indeed enhance the robustness of neural ODE models under both black-box and white-box scenarios.
\begin{table}[]
\small
    \centering
   \begin{tabular}{ccc} 
    \toprule
    Dataset & max \# Nodes & max \# Edges \\
    \midrule
    
     Cora &  60  & 20   \\
    \midrule
    
     Citeseer &  90  & 10   \\
    \midrule
    
     PubMed &  200  & 100   \\
    \midrule
    
      Computers &  300  & 150   \\
    
    \bottomrule
    \end{tabular}
    \caption{Attack budget for GIA}
    \label{tab:attack_max_min}
\end{table}

\subsection{Adversarial training}

Adversarial training (AT), as demonstrated in the paper \cite{madry2017pgd_atraining}, is an effective strategy for mitigating attacks. It entails the incorporation of perturbations or noise during the training process to bolster the robustness of the model. This approach serves as a general framework that can be applied to any model. In this work, we employ the Projected Gradient Descent (PGD) adversarial training method (AT-PGD), as outlined in \cite{madry2017pgd_atraining}, to train our graph neural FDE models and enhance their robustness. 

The results of AT are presented in \cref{tab:adv_ind_at}. It is evident that AT-PGD significantly improves the robustness of FROND models. These findings demonstrate that FROND models can be effectively combined with other defense mechanisms.

\begin{table*}[t]
\centering

\small
\makebox[\textwidth][c]{
\begin{tabular}{cccccccc} 
\toprule
Dataset & Attack & F-GRAND-AT & F-GraphBel-AT  & F-GraphCON-AT   \\

\midrule
\multirow{4}{*}{Citeseer} 
& \emph{clean} &  72.0$\pm$0.31 & 71.37$\pm$0.80 & 65.99$\pm$0.16 \\
& PGD &  71.16$\pm$0.84 & 71.25$\pm$0.24 & 65.93$\pm$0.10 \\
& TDGIA &  71.26$\pm$0.73 &  70.01$\pm$0.65 & 64.94$\pm$0.26 \\
& MetaGIA &  71.58$\pm$0.63 & 70.95$\pm$0.85 & 65.73$\pm$0.42 \\
\bottomrule
\end{tabular}}
\caption{Node classification accuracy (\%) on graph {\bf injection, evasion, non-targeted, black-box} attack  in {\bf inductive} learning.}
 \label{tab:adv_ind_at} 
\end{table*}

\subsection{Pre-processing methods}
As mentioned in our main paper, several methods \cite{zhang2020gnnguard}, \cite{jin2020prognn}, \cite{deng2022garnet} employ preprocessing techniques to prune or rewire the graph structure, thereby removing malicious edges or nodes. In this work, we integrate FROND models with GNNGUARD \cite{zhang2020gnnguard} to further enhance their performance. GNNGUARD identifies suspicious nodes or edges and refines the edge weights to mitigate the influence of these suspicious edges.

We introduce the models F-GraphCON-GUARD and GraphCON-GUARD by incorporating the GNNGUARD preprocessing techniques into the graph neural FDE model and neural ODE model, respectively. The results after adversarial attacks are presented in \cref{tab:adv_ind_guard}. As can be observed, GNNGUARD enhances the robust accuracy in both cases. However, our F-GraphCON-GUARD performs better, as the base F-GraphCON model already exhibits superior robustness compared to GraphCON. This demonstrates that our graph neural FDE model can be seamlessly integrated with preprocessing methods to further improve robustness.

\begin{table*}[!h]
\centering

\small
\makebox[\textwidth][c]{
\begin{tabular}{cccccccc} 
\toprule
Dataset & Attack & F-GraphCON-GUARD & F-GraphCON  & GraphCON-GUARD  & GraphCON   \\
\midrule

\multirow{4}{*}{Citeseer} 
& \emph{clean} & 71.22$\pm$0.61 &   73.50$\pm$0.34  & 71.93$\pm$0.82 & 72.07$\pm$0.93 \\
& PGD &  62.28$\pm$1.86 &   54.47$\pm$1.0  & 46.52$\pm$3.37 & 37.71$\pm$7.0 \\
& TDGIA &  63.36$\pm$1.34 &   54.71$\pm$1.69  & 51.52$\pm$1.71 & 30.93$\pm$3.0 \\
& MetaGIA &  59.73$\pm$2.26  &   48.82$\pm$3.27  & 51.70$\pm$3.40 & 29.09$\pm$2.01 \\
\midrule

\multirow{4}{*}{Pubmed} 
& \emph{clean} & 90.06$\pm$0.07 &  90.30$\pm$0.11  & 89.60$\pm$0.17 & 
 88.09$\pm$0.32  \\
& PGD & 83.39$\pm$3.38 &   51.16$\pm$6.04  &  63.55$\pm$2.19 & 45.85$\pm$1.97  \\
& TDGIA & 73.20$\pm$1.44 &   55.50$\pm$4.03  & 51.73$\pm$2.94 & 45.57$\pm$2.02  \\
& MetaGIA & 79.97$\pm$5.01 &   52.03$\pm$5.53 & 62.44$\pm$3.71  & 45.81$\pm$2.81  \\

\bottomrule
\end{tabular}}
\caption{Node classification accuracy (\%) on graph {\bf injection, evasion, non-targeted} attack  in {\bf inductive} learning.  }
 \label{tab:adv_ind_guard} 
\end{table*}

\bibliography{bib/IEEEabrv, bib/aaai24}

\end{document}